\documentclass{article}

\PassOptionsToPackage{numbers, compress}{natbib}
\usepackage[preprint]{neurips_2026}

\usepackage[utf8]{inputenc} % allow utf-8 input
\usepackage[T1]{fontenc}    % use 8-bit T1 fonts
\usepackage{hyperref}       % hyperlinks
\usepackage{url}            % simple URL typesetting
\usepackage{booktabs}       % professional-quality tables
\usepackage{amsfonts}       % blackboard math symbols
\usepackage{nicefrac}       % compact symbols for 1/2, etc.
\usepackage{microtype}      % microtypography
\usepackage{xcolor}         % colors
\usepackage{graphicx}
\usepackage{caption}  % For customizing captions
\usepackage{subcaption} % For creating subfigures
\usepackage{amsthm}
\usepackage{amssymb}
\usepackage{amsmath}
\usepackage{wrapfig}
\usepackage{CJKutf8}
\usepackage{fontawesome5}
\usepackage{amssymb}

\newcommand{\zh}[1]{\begin{CJK}{UTF8}{gbsn}#1\end{CJK}}
\theoremstyle{plain}
\newtheorem{theorem}{Theorem}[section]
\newtheorem{lemma}[theorem]{Lemma}

\newtheorem{assumption}[theorem]{Assumption}
\newtheorem{corollary}[theorem]{Corollary}

\theoremstyle{definition}
\newtheorem{definition}[theorem]{Definition}

\theoremstyle{remark}

\usepackage{enumitem}

\usepackage[compact]{titlesec}

\usepackage{listings}
\usepackage[most]{tcolorbox}
\tcbuselibrary{breakable,skins}

\definecolor{PromptBlue}{HTML}{0B6FAE}

\newtcblisting{promptbox}[1]{%
  enhanced,
  breakable,
  colback=white,
  colframe=PromptBlue,
  boxrule=0.8pt,
  arc=3mm,
  left=6pt,right=6pt,top=6pt,bottom=6pt,
  title={#1},
  colbacktitle=LightCyan,
  coltitle=black,
  fonttitle=\bfseries,
  boxed title style={boxrule=0pt,arc=3mm,left=8pt,right=8pt,top=4pt,bottom=4pt},
  listing only,
  listing options={%
    basicstyle=\ttfamily\footnotesize,    
    columns=fullflexible,
    keepspaces=true,
    breaklines=true,
    breakautoindent=false, % <-- important
    breakindent=0pt,       % <-- important
    showstringspaces=false,
    escapeinside={(*@}{@*)}
  }%
}

\definecolor{LightCyan}{RGB}{238,246,255}
\definecolor{WhiteColor}{RGB}{255,255,255}

\titlespacing{\section}{0pt}{0.9ex}{0.9ex}
\titlespacing{\subsection}{0pt}{0.9ex}{0.9ex}
\titlespacing{\subsubsection}{0pt}{0.9px}{0.9px}
\titlespacing*{\paragraph}{0pt}{0.5ex}{1ex}
\expandafter\def\expandafter\normalsize\expandafter{%
    \normalsize%
    \setlength\abovedisplayskip{5pt}%
    \setlength\belowdisplayskip{5pt}%
    \setlength\abovedisplayshortskip{0pt}%
    \setlength\belowdisplayshortskip{0pt}%
}

\definecolor{validgreen}{HTML}{009E73}
\definecolor{invalidred}{HTML}{D55E00}
\title{Sampling More, Getting Less: Calibration is the Diversity Bottleneck in LLMs}

\hypersetup{
    colorlinks=true,
    urlcolor=blue,
    linkcolor=blue,
    citecolor=blue
}

\newcommand{\icmlspade}{\ensuremath{\textcolor[HTML]{990000}{\spadesuit}}}
\newcommand{\icmlclub}{\ensuremath{\textcolor[HTML]{004977}{\clubsuit}}}

\author{
\parbox{\textwidth}{\centering
\textbf{Amin Banayeeanzade\thanks{Equal Contribution.}\,\,\,$^{\icmlspade}$} \quad
    \textbf{Qingchuan Yang\footnotemark[1]\,\,\,$^{\icmlspade}$} \quad
    \textbf{Dhruv Tarsadiya$^{\icmlspade}$} \quad
    \textbf{Fatemeh Bahrani$^{\icmlspade}$} \quad \vspace{0.4em} \\
    \textbf{Leonardo Blas$^{\icmlspade}$}\quad
    \textbf{Alfy Samuel$^{\icmlclub}$} \quad 
    \textbf{Robin Jia$^{\icmlspade}$} \quad \vspace{0.4em} \\
    \textbf{Meisam Razaviyayn$^{\icmlspade}$} \quad
    \textbf{Sai Praneeth Karimireddy$^{\icmlspade}$} \quad \vspace{0.6em} \\
    \textnormal{$^{\icmlspade}$University of Southern California \quad \quad $^{\icmlclub}$Capital One} \quad
    \vspace{0.6em} \\
    \texttt{\{banayeea,qcyang,razaviya,karimire\}@usc.edu} \vspace{0.6em}\\
    \includegraphics[height=1em]{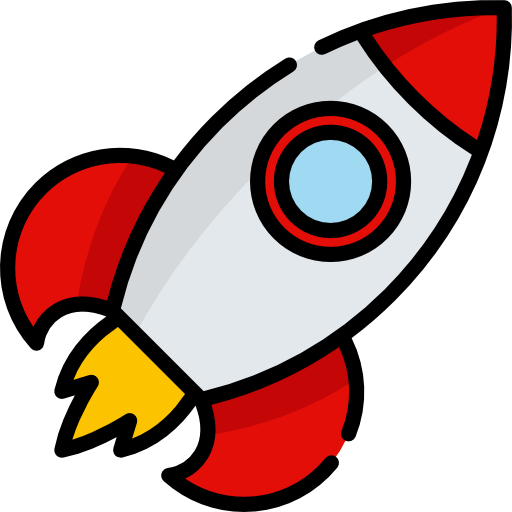} \, Demo: \texttt{\href{https://diversitycalibration.github.io}{https://diversitycalibration.github.io}} \quad\quad \vspace{-3mm}
}
}

\begin{document}

\maketitle

\begin{abstract}
Diversity is essential for language-model applications ranging from creative generation to scientific discovery, yet modern LLMs often collapse into a narrow subset of plausible outputs. While prior work has developed benchmarks for measuring this lack of diversity, less is known about how the step-by-step probability distributions at inference time cause the problem. We introduce a validity--diversity framework that attributes diversity collapse to how an LLM allocates probability mass across valid and invalid continuations during decoding. This framework decomposes the bottleneck into two complementary forms of miscalibration. First, \emph{order calibration}: valid tokens are not reliably ranked above invalid tokens, so rank-based cutoff rules must trade off between recovering valid continuations and admitting invalid ones. Second, \emph{shape calibration}: probability mass is overly concentrated only on few valid continuations while having a heavy-tail of mixed valid and invalid tokens, so maintaining high validity limits diversity. We formalize both mechanisms and show that local failures compound across decoding steps, producing strong sequence-level losses in diversity. Empirically, we develop controlled diagnostics for probing these bottlenecks, including tasks with exactly known valid sets and oracle cutoff baselines. Across 14 language models spanning multiple families and scales, we find that diversity collapse is not merely a limitation of particular sampling heuristics, but a consequence of order and shape miscalibration in the LLM distribution.
\end{abstract}

\section{Introduction}

\begin{figure}[ht]
    \centering
      \begin{subfigure}[!t]{0.780\textwidth}
        \includegraphics[width=1.0\linewidth]{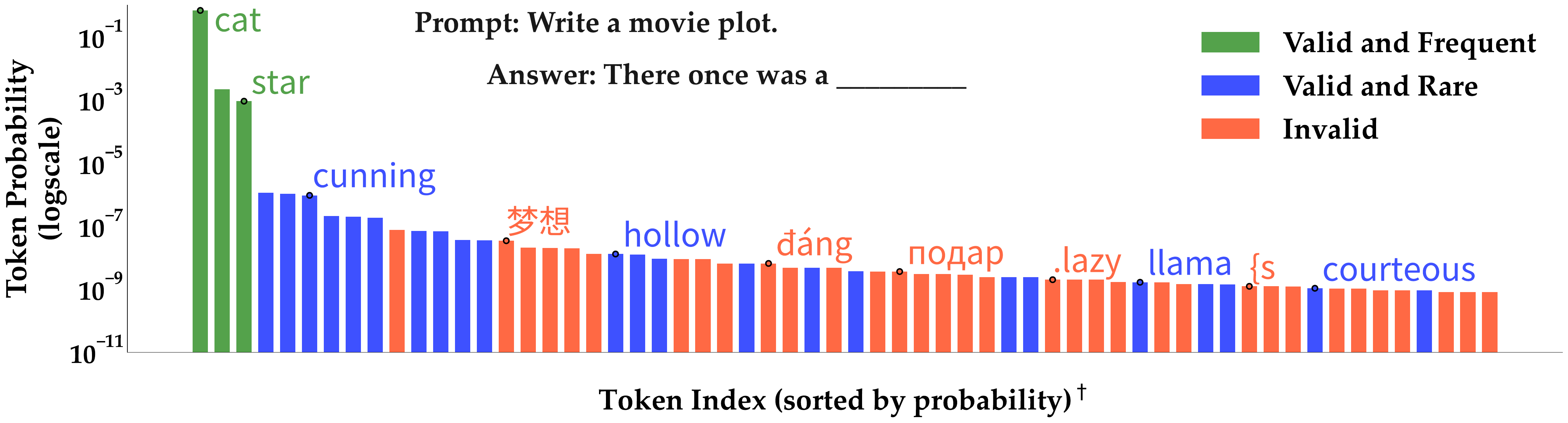}
      \end{subfigure}
       \hfill
      \begin{subfigure}[!t]{0.210\textwidth}
        \includegraphics[width=\linewidth]{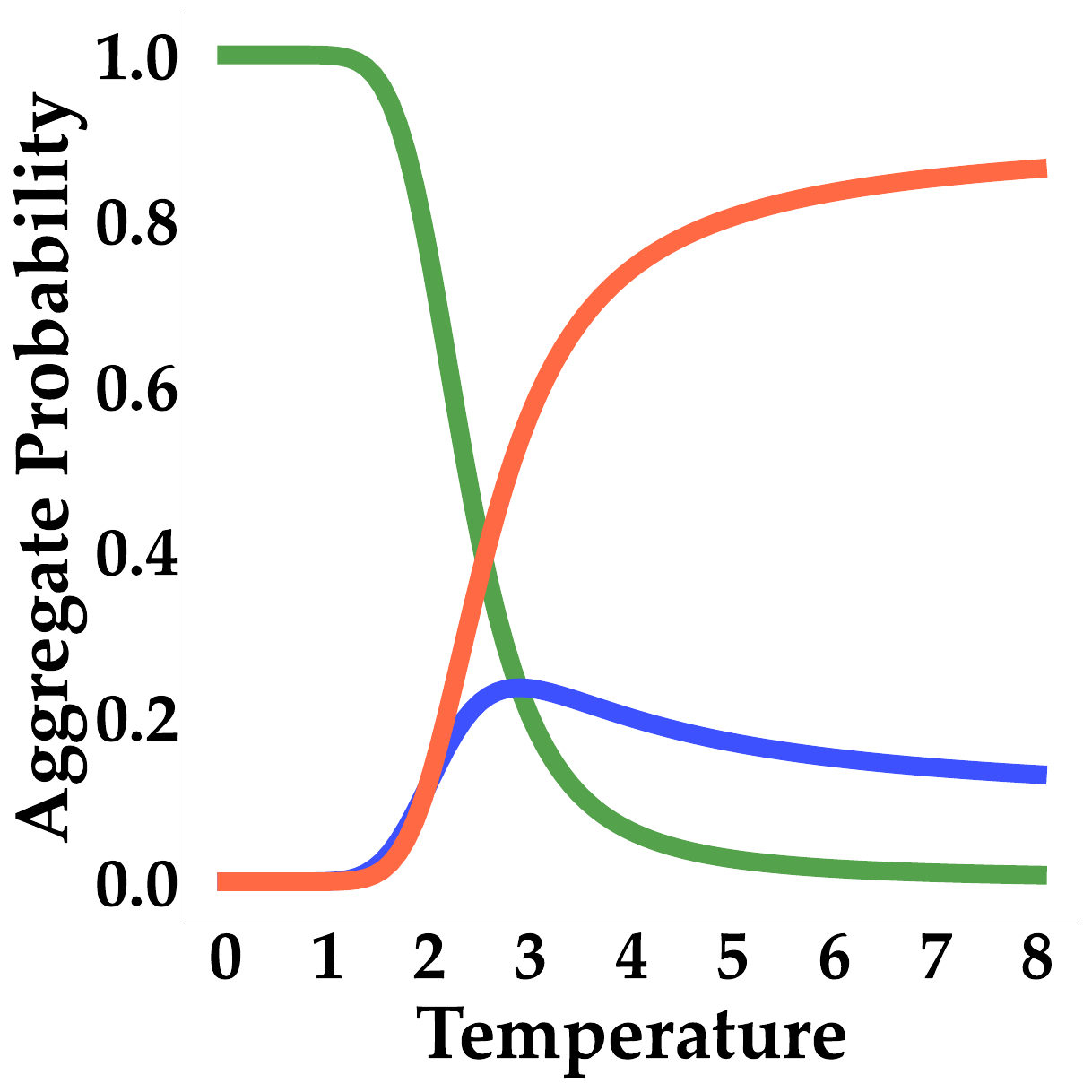}
      \end{subfigure}
    \caption{\textbf{(Left)} The token distribution of a generation step from \textsc{Qwen3.5-35B-A3B}. The distribution is very sharp in the front, followed by a heavy tail with mixed valid and invalid tokens. As a result, \textbf{(Right)} many valid tokens are unlikely to appear in the output under any temperature sampling. {$^\dagger$ tokens} are subsampled non-uniformly for enhanced visualization.}
    \label{fig:logits}
\end{figure}

Diversity in generation is essential for a wide range of applications, including synthetic data generation~\citep{banayeeanzade2026epsvec}, creative writing~\citep{zhang2025noveltybench}, recommendation systems~\citep{recommendation_diversity}, coding~\citep{yue2025limit-of-rlvr}, and exploration for scientific discovery~\citep{tu2026shared}. However, contemporary large language models (LLMs) often exhibit a notable lack of diversity~\citep{jiang2025artificial, murthy-etal-2025-one, sourati2025shrinking, Echoes2025Xu}. For instance, $59\%$ of stories in a \texttt{GPT-4}-generated dataset begin with \textit{``Once upon a time''}, and \texttt{GPT-5.5} repeatedly outputs \textit{``Valparaíso, Chile''} when asked to name a random city in the world (see Appendix~\ref{app:gpt}). 

These examples illustrate a broader failure mode: when generation is overly concentrated on a small set of high-probability outputs, many valid alternatives may be systematically undersampled. Recent
work has made substantial progress in measuring this lack of diversity~\citep{zhang2025noveltybench}, but they neither explain the source of the collapse nor provide diagnostic tools to systematically trace the problem.

Although standard sampling methods intend to resolve the problem, they rather reveal the difficulty; temperature sampling flattens the distribution, but often shifts probability mass toward invalid or nonsensical continuations before sufficient diversity is recovered~\citep{shypula2025evaluating}. Top-token filtering methods such as top-$k$ and min-$p$~\citep{minh2025turning} truncate the ranked distribution, but they either drop many valid alternatives or include invalid tokens. This suggests that the bottleneck is not merely the decoding heuristic, but the properties of the LLM distribution itself. We therefore ask: 
\begin{center}
\vspace{-0.5mm}
\textit{What are the distributional properties of LLMs that constrain their
ability \\ to generate outputs that are both valid and diverse?}
\vspace{-0.5mm}
\end{center}
To formalize this view, we study the inference-time distribution of an LLM through the lens of \emph{validity--diversity trade-off}. Rather than only measuring diversity on completed generations, we analyze the decoding process itself and how the model's next-token distribution allocates probability mass across generations. This perspective reveals two distinct failure modes.

First, LLMs fail in \emph{order calibration}: valid tokens are not reliably ranked above invalid ones. In Figure~\ref{fig:logits}, we show that many valid alternative tokens (blue) appear farther down the ranked distribution and are interleaved with invalid ones (red). When this occurs, any top-token filtering rule faces an unavoidable trade-off: expanding the cutoff recovers more valid continuations but also admits more invalid ones, while tightening it preserves validity but excludes valid alternatives. 

Second, LLMs fail in \emph{shape calibration}: probability mass is non-uniformly concentrated on a small number of valid continuations (green), but much smaller probability is assigned to other valid tokens (blue), while having a heavy tail of many invalid tokens (red). In the right panel of Figure~\ref{fig:logits}, we show that increasing temperature shifts probability mass away from the head, but much of this mass flows into the invalid tail rather than recovering rare valid alternatives. We formalize these effects and show that the resulting validity--diversity loss compounds when generating longer sequences.

Across $14$ language models spanning multiple families and scales, we find that these calibration failures constitute the primary bottleneck to diversity and addressing them unlocks broader diversity in model outputs. Our findings also have implications for model training and design, suggesting directions for mitigating these bottlenecks at their source. Finally, our analysis challenges implicit assumptions underlying common sampling strategies and provides guidance for more principled benchmarking and evaluation of decoding methods.

\paragraph{Contributions.}
Building on this framework, our contributions are:
\begin{enumerate}[left=2ex, topsep=1pt, itemsep=1pt, parsep=1pt]
    \item We introduce a framework for analyzing the validity--diversity trade-off at both token and sequence levels.
    \item We introduce \emph{order calibration} and \emph{shape calibration} as two complementary distributional bottlenecks. We theoretically and empirically demonstrate that local failures compound over sequence length.
    \item We develop controlled empirical diagnostics for probing these bottlenecks, including settings with exactly known valid sets and oracle cutoff baselines, and show that no decoding method that relies on top-token filtering can effectively recover diversity.
\end{enumerate}

\section{Related Work}

\paragraph{Diversity and mode collapse.}

Limited output diversity has emerged as a central failure mode of modern language models~\citep{Echoes2025Xu, ghafouri2026varianceparadoxaireduces, guo-etal-2025-benchmarking-linguistic, yu2025generationspacesizeunderstanding}. Homogeneity of the generation appears both within a single model across repeated samples~\citep{sourati2025shrinking} and across different models on the same prompts~\citep{jiang2025artificial}. Even frontier models remain substantially less diverse than humans~\citep{zhang2025noveltybench}, especially the models with extensive alignment and post-training~\citep{yang2025llm, west2025basemodelsbeataligned, kirk2024understandingeffectsrlhfllm, karan2025reasoningsamplingbasemodel, murthy-etal-2025-one}. Together, these works motivate studying diversity as a first-class property of language generation.

\paragraph{Evaluation.}
Diversity is inherently multi-dimensional, and recent work has moved beyond narrow lexical metrics toward broader assessments of open-ended generation \citep{zhang2025noveltybench,jiang2025artificial, karanjai2025evaluatingqualityrandomnessentropy}. Attempts to improve diversity might lead to text degeneration~\citep{Holtzman2020The}, and temperature should be carefully tuned~\citep{troshin2025controltemperatureselectivesampling, zhou2025balancingdiversityriskllm}. Therefore, diversity should not be assessed in isolation from quality~\citep{schaeffer2025minpmaxexaggerationcritical}, and raw diversity is misleading when many outputs are low-quality~\citep{shypula2025evaluating, wang2025optimizing}. Following this perspective, we treat diversity as meaningful only insofar as it broadens the space of valid, useful outputs, and we use this lens to diagnose when existing decoding rules fail to do so.

\paragraph{Improving diversity.}
A long line of work has sought to improve diversity through prompting~\citep{zhang2025verbalized, misaki2026stringseedthoughtprompting, wang2025multilingualpromptingimprovingllm}, training~\citep{li2025jointlyreinforcingdiversityquality, ismayilzada2025creativepreferenceoptimization, li2025preservingdiversitysupervisedfinetuning, chung2025modifyinglargelanguagemodel}, base-aligned model collaboration~\citep{wang2025optimizing, peeperkorn2025mindgapconformativedecoding}, and inference-time interventions~\citep{beam, su2023contrastive}. Closer to our work, inference-time stochastic methods such as temperature scaling \citep{ACKLEY1985147}, top-$k$ \citep{fan2018hierarchical}, top-$p$ \citep{Holtzman2020The}, and min-$p$ \citep{minh2025turning} sampling modify the support or sharpness of the next-token distribution.
% , with nucleus sampling in particular highlighting how high-probability decoding can itself induce degeneration \citep{Holtzman2020The}. 
Subsequent methods make truncation more adaptive \citep{truncation, meisterlocally, minh2025turning, potraghloo2025tophdecodingadaptingcreativity, zhu2024improvingopenendedtextgeneration}. Our work complements this literature: rather than proposing another decoding strategy, we examine why existing sampling rules often fail to recover meaningful diversity.

\section{Preliminaries}

We consider an auto-regressive LLM with vocabulary $\mathcal{V}$. Given a prompt $x\in \mathcal{V}^*$ and a generated prefix $y_{<t} = (y_1,\dots,y_{t-1}) \in \mathcal{V}^{t-1}$, the model defines a conditional distribution
$p(\cdot \mid y_{<t}, x)$
over the next token. The probability of a complete output $y = (y_1,\dots,y_d)$ of length $d$ is
$p(y \mid x) = {\prod_{t=1}^d p(y_t \mid y_{<t}, x)}$. When the task is fixed and unambiguous, we omit $x$ from our notations. 

We use $V \subseteq \mathcal{V}^*$ to represent the set of all valid responses to $x$. With slight misuse of notation, we say $y_{<t} \in V$, if there exists a continuation $w \subseteq\mathcal{V}^*$ such that the concatenation $y_{<t} \circ w \in V$.

\begin{definition}[Validity and Diversity]
\label{def:dv}
For a prompt $x$, let $Y \sim p(\cdot \mid x)$ denote the model's distribution over complete responses.

\begin{enumerate}[left=2ex, topsep=1pt, itemsep=1pt, parsep=1pt]
\item \textbf{(Validity)} We define validity as the total probability mass that the model assigns to $V$:
\[
\mathrm{Val}(p) := p(Y \in V \mid x) = \sum_{y \in V} p(y \mid x).
\]

\item \textbf{(Diversity)} We assume that all valid responses are equally preferred, and hence define diversity as the normalized effective support size of the model distribution restricted to the valid set:
\[
\mathrm{Div}(p)
:= \frac{e^{H(\tilde{p})}}{|V|},
\]
where $\tilde{p}=p(Y\mid Y\in V, x)$, and $H$ is the Shannon entropy.
\end{enumerate}

Intuitively, validity captures the probability mass that the model assigns to valid responses, and diversity then quantifies the effective coverage of valid outputs under the distribution restricted to the valid set~\citep{hillNumber, entropyanddiversity}. In a related work, \citet{yang2025llm} defines the exponential of entropy to be a token-level measure of the effective number of plausible next steps during generation. Both validity and diversity take values in the interval $[0,1]$, with higher values indicating better performance. Moreover, we define: %The goal of our paper is to identify the bottlenecks in decoding that make such trade-offs.

\end{definition}

\begin{definition}[Valid Continuations and Valid Tokens]
\label{def:valid_tokens}
Given a context $y_{<t} \in \mathcal{V}^{t-1}$, valid continuations are defined as the number of all sequences in $V$ that begin with the prefix $y_{<t}$,
\[
N(y_{<t}) := \bigl|\{z \in \mathcal{V}^* : y_{<t}\circ z \in V \}\bigr|,
\]

and the set of valid tokens is accordingly defined as tokens that lead to at least one valid continuation,
\[
G(y_{<t}) := \{v \in \mathcal{V} : N(y_{<t} \circ v)>0 \}.
\]

%These are precisely the next tokens that preserve access to at least one valid continuation.
\end{definition}

A decoding rule achieves high validity if it assigns high probability only on valid tokens at each time step, and it achieves high diversity if it explores many distinct tokens in $G$ rather than concentrating on only a few of them. However, LLMs, regardless of the decoding strategy employed, often exhibit a pronounced validity--diversity trade-off. In this work, we identify two primary sources of this phenomenon arising from properties of the model distribution: In \S\ref{sec:weak_calibration}, we first introduce \emph{order calibration} and its implications on top-token filtering methods. Next in \S\ref{sec:strong_calibration}, we identify the \emph{shape calibration} issue and we show that together, these effects constitute the primary sources of the observed validity--diversity trade-off.

\section{Order Calibration Fails: Valid tokens are not ranked first}\label{sec:weak_calibration}
Modern decoding strategies implicitly assume that valid tokens are concentrated near the top of the ranked distribution and that simple statistics of the distribution (e.g., cumulative mass or relative probability gaps) can reliably identify and retain these tokens. Under this view, diversity is increased by expanding the retained set, while validity is preserved by truncating low-probability regions.

In this section, we show that the LLM token distributions systematically violate these assumptions. Valid tokens are not confined to the head but are frequently interspersed with invalid tokens throughout the tail (see Figure~\ref{fig:logits}), and the relationship between rank and validity is neither monotone nor stable across contexts. As a result, any decoding rule based solely on rank-based filtering faces an inherent limitation: it must inevitably trade off between excluding valid tokens and admitting invalid ones. Even small imperfections in separating valid and invalid tokens at each step compound multiplicatively over long generations, leading to a sharp degradation in reachable valid outputs.

\paragraph{Cutoff strategies.}
We abstract all the top-token filtering methods as cutoff strategies. Let $S$ denote a cutoff strategy. Given a prefix $y_{<t}$, it first sorts the tokens by their conditional probabilities, and then selects a cutoff index, retaining all tokens up to that index and discarding the rest. Importantly, our framing strictly contains any adaptive top-token filtering method, since $S$ is not a predetermined rule; we allow it to be any arbitrary cutoff strategy, potentially depending on the prefix $y_{<t}$ and the model distribution $p(\cdot \mid y_{<t})$ at each step. Let $S_t(y_{<t})$ denote the set of the retained tokens. An ideal strategy $S$ would include as many valid tokens as possible (high recall) while excluding all invalid tokens (high precision). Therefore, we define the following to measure the quality of a strategy:

\begin{definition}[Precision/Recall]
\label{def:pr}
Let $V \subseteq \mathcal{V}^d$ denote the set of valid sequences,
$G(y_{<t})$ the set of valid next tokens, $N(y_{<t})$ the number of valid continuations, and $S_t(y_{<t})$ the set of retained tokens for a prefix $y_{<t}$.

\begin{enumerate}[left=2ex, topsep=1pt, itemsep=1pt, parsep=1pt]
    \item \textbf{(Local Precision)} 
    We define local precision as the fraction of retained tokens that are valid,
    \[
    \mathrm{Prec}_t(S;y_{<t})
        :=
    \frac{|S_t(y_{<t}) \cap G(y_{<t})|}{|S_t(y_{<t})|}.
    \]
    
    \item \textbf{(Local Recall)} We define local recall as the fraction of valid continuations that remain reachable after truncation.
    \[
        \mathrm{Rec}_t(S;y_{<t})
        :=
        \frac{
        \sum_{v \in S_t(y_{<t}) \cap G(y_{<t})}
        N(y_{<t}\circ v)
        }{
        N(y_{<t})
        }.
    \]
\end{enumerate}
Moreover, let $Q_S$ denote the sequence distribution induced by uniformly sampling from the retained sets. Then,
\begin{enumerate}[start=3, left=2ex, topsep=1pt, itemsep=1pt, parsep=1pt]
\item \textbf{(Sequence-Level Precision)} We define sequence precision as the probability of generating a valid sequence, i.e., \[\mathrm{Prec}_{\mathrm{seq}}(S):=Q_S(Y \in V).\]
\item \textbf{(Sequence-Level Recall)} We define sequence recall as the fraction of valid sequences that remain reachable.
\[\mathrm{Rec}_{\mathrm{seq}}(S):=\frac{
|\{y \in V : y_t \in S_t(y_{<t}) \ \forall t\}|
}{|V|}\]
\end{enumerate}
\end{definition}

\paragraph{Relation to validity and diversity.}
Sequence precision coincides exactly with validity. Sequence recall captures a complementary notion of diversity: it measures how much of the valid output
space remains accessible under the decoding rule. While our definition of diversity is entropy-based, recall provides a notion of coverage. In particular, if recall is small, then the decoder can only explore a small subset of valid outputs, regardless of how probability is distributed within it.

\subsection{A controlled testbed for order calibration}
\label{sec:order_testbed}
Our goal is to empirically measure the precision--recall trade-off introduced by any cutoff strategy.
A central challenge in practice is that the valid set $V$ is intractable to characterize, as it grows exponentially with sequence length, especially in open-ended tasks such as storytelling. To address this, we propose a practical procedure to approximate the precision/recall metrics. Detailed setup, models, and prompts are found in Appendix~\ref{app:pr-setup}.
\begin{figure}[t]
    \centering
      \begin{subfigure}[!t]{0.64\textwidth}
        \includegraphics[width=1.0\linewidth]{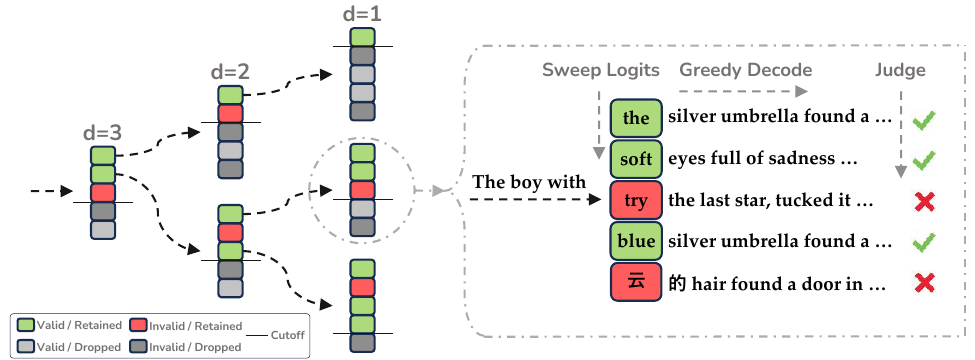}
      \end{subfigure}
       \hfill
    {\color{gray}\vrule width 0.2pt}
    \hfill
      \begin{subfigure}[!t]{0.30\textwidth}
      \includegraphics[width=\linewidth]{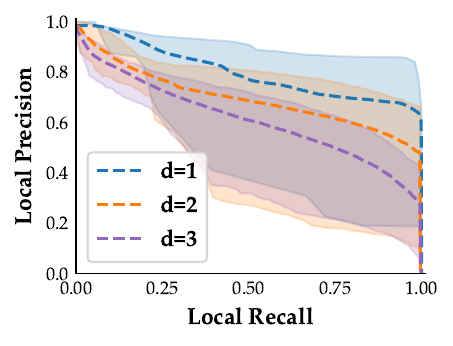}
      \end{subfigure}
    \caption{\textbf{(Left)} We sweep the logits and cutoff thresholds at each conditional distribution to enumerate retained tokens up to a certain depth, followed by greedy decoding from each leaf. A judge model then evaluates the validity of the generated sequences, allowing us to attribute a validity label to each token. We then measure the number of valid/invalid tokens that were retained/dropped by the cutoff strategy to obtain local precision and recall. \textbf{(Right)} The frontier precision--recall trade-offs at different depths obtained by sweeping cutoff strategies. The trade-off degrades as depth increases.}
    \label{fig:labeling_and_pr}
\end{figure}
\paragraph{Measuring token validity.} Given a fixed prefix $y_{<t}$, we query the LLM to get a sorted list of all next token candidates $\{v_1,\cdots,v_\mathcal{|V|}\}$. For each token $v_j$, we construct the extended prefix $y_{<t} \circ v_j$ and then perform greedy decoding to completion. This approximates the model’s most likely continuation conditioned on having selected $v_j$ (see Figure~\ref{fig:labeling_and_pr}, middle panel). We then evaluate the resulting sequence using an LLM-as-a-judge~\cite{gu2024survey}, scoring grammar, semantic, and overall validity. By thresholding this score, we obtain a binary validity label for the token $v_j$. Repeating this process for all tokens yields an approximate assessment of token-level validity given the prefix, and allows us to compute precision--recall trade-off as a function of the cutoff strategy in the next sections.

Appendix~\ref{app:llm-as-a-judge-prompts} provides additional details, including the evaluation rubric. Appendix~\ref{app:llm-as-a-judge-results} validates the reliability of the LLM judge against human annotations and examines the impact of judge model choice. Moreover, Appendix~\ref{app:greedy_vs_sampling} compares greedy completions vs sampling and shows that the results are robust to this choice.

\begin{wrapfigure}{r}{0.32\textwidth}
\vspace{-2mm} 
\includegraphics[width=\linewidth]{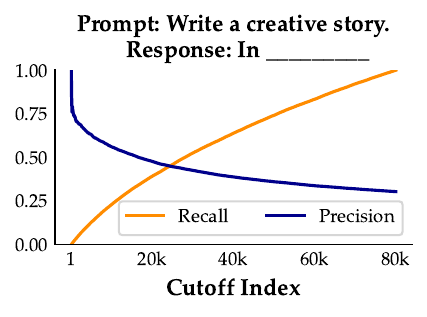}
\vspace{-6mm}    \caption{Local precision--recall trade-off when sweeping the cutoff in a single generation step.}
    \label{fig:1d_sweep}
\end{wrapfigure}

\paragraph{Single-step precision--recall trade-off.} The extracted labels allow us to measure the precision--recall trade-off at a single decoding step to observe how valid tokens are distributed in the conditional. In Figure~\ref{fig:1d_sweep}, we sweep the cutoff from the first index up to the token at rank $80$k on a story generation task with \texttt{Qwen3.5-35B}. Plotting the precision and recall at each cutoff, we observe that precision drops sharply at the front, but recall slowly improves, even at high token indexes. This indicates that there is a strong precision--recall trade-off and order calibration is severely violated on this single conditional. We provide qualitative examples in Appendix~\ref{app:pr-examples}.

\textbf{Multi-step precision--recall trade-off.} We extend our single-step methodology to a more realistic setup that captures the multi-step effects. Particularly, we extend the procedure to depth ${d > 1}$: Instead of immediately decoding greedily, we recursively expand tokens at each successive step, constructing a tree of continuations (Figure~\ref{fig:labeling_and_pr}, left). After expanding all nodes up to $d$ tokens at each branch, we complete each leaf via greedy decoding. By evaluating these resulting sequences with a judge, we obtain a validity label for each token in this sequence along the tree: a token is valid if at least one sequence containing it is valid. Since the number of sequences grows exponentially with depth and each depth requires many LLM calls, we sweep up to depth $3$ and subsample tokens at each depth, as detailed in Appendix~\ref{app:pr}.

Given the validity label of every token up to step $d$, we construct the precision--recall trade-off curves by sweeping over all possible cutoff strategies, and computing the local trade-off for every node using Definition~\ref{def:pr}. Note that our framework includes all top-token filtering strategies, as it allows any node to adjust its own cutoff arbitrarily. Each cutoff strategy gives a single point in the precision--recall curve; we take the Pareto frontier of all strategies as a representative of the best achievable trade-off.

\paragraph{Local precision--recall trade-off worsens with depth.}

We perform the above procedure across $10$ seeds, each repeated with a random query from NoveltyBench~\citep{zhang2025noveltybench} and a random prefix $y_{<t}$. For each seed, we compute the Pareto-optimal precision--recall curve at every node in the generation tree as previously mentioned. Figure~\ref{fig:labeling_and_pr} (right) summarizes the results. To analyze the effect of horizon length, we group nodes by their depth in the generation tree and report, for each depth, the maximum, minimum, and average Pareto frontiers. 
We observe the following:
\begin{itemize}[left=2ex, topsep=1pt, itemsep=1pt, parsep=1pt]
    \item Even the optimal cutoff strategy exhibits a non-negligible local precision--recall trade-off.
    \item The trade-off worsens as depth increases (from $d=1$ to $d=2$, and from $d=2$ to $d=3$).
    \item This degradation is not only present in the average, but also on the maximum frontier.
\end{itemize}

Overall, these results show that the precision--recall trade-off induced by cutoffs worsens with decoding depth. This provides empirical evidence that order calibration is frequently violated: valid continuations are not reliably ranked above invalid ones, and this misalignment compounds over longer horizons.

\subsection{Local order failures compound into sequence-level collapse}
\label{sec:order_compounding}
These measurements are at the level of local precision and recall. At the sequence level, the trade-off is stronger: even small but constant local imperfections lead to a dramatic collapse in global diversity since the errors incurred at each depth compound multiplicatively, as we show in the following:
\begin{theorem}[Compounding effect of decoding steps]
\label{thm:mult}
Suppose that at least \(m\) decoding positions exhibit a constant local precision--recall trade-off: at each such position, high local precision must discard a constant fraction of valid
continuations. Then there exist constants \(c,C>0\) such that any cutoff strategy
\(S\) satisfying
\[
\mathrm{Prec}_{\mathrm{seq}}(S) \ge 1-\delta
\]
must satisfy
\[
\mathrm{Rec}_{\mathrm{seq}}(S)
\le
\left(1-\delta\right)^{-C}
e^{-cm}.
\]
\end{theorem}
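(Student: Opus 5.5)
The plan is to first make the informal hypothesis precise, and then chain a per-position inequality along the decoding path.

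\textbf{Formalizing the hypothesis.} I would read ``constant local precision--recall trade-off'' as follows. There are constants $\alpha\in(0,1)$ and $\beta\in(0,1)$ such that, at each of the $m$ designated positions, any retained set must satisfy one of two conditions. Either $\mathrm{Rec}_t(S;y_{<t})\le 1-\alpha$, or $\mathrm{Prec}_t(S;y_{<t})\le 1-\beta$. Equivalently, keeping more than a $1-\alpha$ fraction of valid continuations forces a $\beta$ fraction of the retained tokens to be invalid.

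I would assume the designated positions lie on every valid path, for instance a fixed set of depths $t_1<\dots<t_m$, and that the hypothesis holds for every prefix at those depths.

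\textbf{Decomposing sequence recall and precision.} Sequence recall factorizes along the prefix tree as a telescoping product. Reachable valid sequences are counted by multiplying, depth by depth, the fraction $\mathrm{Rec}_t$ of valid continuations kept. More precisely, $\mathrm{Rec}_{\mathrm{seq}}(S)$ is an average over reachable valid paths of $\prod_t \mathrm{Rec}_t$, weighted by $N$.

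Similarly, under $Q_S$ (uniform sampling within retained sets), the probability that a sampled sequence stays valid is at most a product of local precisions along the path. Indeed, at each step the chance of picking a valid token given a valid prefix is exactly $\mathrm{Prec}_t$. Hence $\mathrm{Prec}_{\mathrm{seq}}(S)\le \mathbb{E}_{Q_S}\bigl[\prod_{t}\mathrm{Prec}_t\bigr]$.

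\textbf{Counting the precision-sacrificing positions.} Let $k$ be the number of designated positions, along a typical sampled path, at which the strategy chooses the high-recall option. Each such choice costs a factor $(1-\beta)$ in precision, so roughly $(1-\beta)^k\ge 1-\delta$. This gives $k\le \log(1/(1-\delta))/\log(1/(1-\beta))=:C'\log\frac{1}{1-\delta}$.

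The remaining $m-k$ positions each cost a factor $(1-\alpha)$ in recall. Therefore
\[
\mathrm{Rec}_{\mathrm{seq}}\le (1-\alpha)^{m-k}\le e^{-cm}(1-\alpha)^{-k}\le e^{-cm}(1-\delta)^{-C},
\]
with $c=\log\frac{1}{1-\alpha}$ and $C=C'\log\frac{1}{1-\alpha}$.

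\textbf{Main obstacle.} The hard step is that $k$ is path-dependent, because the strategy is adaptive. Precision is an average over $Q_S$-paths while recall is an $N$-weighted count over valid paths, so the two cannot simply be compared on one path.

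I would handle this with a Markov-type argument, using $\mathbb{E}_{Q_S}[(1-\beta)^{K}]\ge 1-\delta$. For this I would define $K$ as the number of designated high-recall positions on the sampled path. The difficulty is transferring the resulting control on $K$ from the $Q_S$ path measure to the uniform measure on valid sequences. If the general case resists, I would fall back on a simplifying regularity assumption, namely that $N$ is balanced across valid tokens at each designated position so that the two measures are comparable up to constants. Otherwise I would impose the trade-off uniformly, so that every path at every designated depth faces it. In either case the bound is obtained by conditioning on $K$ and splitting into the event $K\le 2C'\log\frac1{1-\delta}$ and its complement. The constant $2$ can be absorbed into $C$.
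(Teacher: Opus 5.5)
\textbf{There is a genuine gap, and it is the obstacle you flag at the end; it is not a technicality.} Under your per-prefix reading of the hypothesis, the claimed bound is false, so neither the Markov split nor your two fallbacks can rescue it.

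Here is a counterexample. Take a first position with $g$ valid tokens ranked on top, all with equal continuation counts. At every prefix of each of the next $m$ positions, rank a valid token, then an invalid one, then a second valid token with the same count. Every top-$k$ set at these positions has local recall $1/2$ or local precision at most $2/3$. So your dichotomy holds with $\alpha=1/2$, $\beta=1/3$, uniformly over prefixes, and continuation counts are balanced everywhere.

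Now let $S$ keep all $g$ tokens at step $1$. Below one fixed first token $a$ it keeps the top $3$ at every later step; below the other $g-1$ it keeps only the top token. Then $\mathrm{Prec}_{\mathrm{seq}}(S)=\frac{g-1}{g}+\frac1g(2/3)^m\ge 1-\frac1g$. Meanwhile all $2^m$ valid sequences below $a$ stay reachable, so $\mathrm{Rec}_{\mathrm{seq}}(S)\ge 1/g$ for every $m$. With $\delta=1/g$ fixed and $m\to\infty$, this contradicts $(1-\delta)^{-C}e^{-cm}\to 0$.

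Because of the balance, the uniform and $N$-weighted path measures on retained valid tokens coincide here, so the failure is not a change-of-measure problem. An averaged constraint like $\mathbb{E}[(1-\beta)^K]\ge 1-\delta$ only gives $\Pr(K\ge k)\le \delta/(1-(1-\beta)^k)$, which is of order $\delta$. On that event recall can be essentially full, so your split yields an additive $O(\delta)$ term, never the multiplicative form.

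Separately, your inequality $\mathrm{Prec}_{\mathrm{seq}}(S)\le\mathbb{E}_{Q_S}[\prod_t\mathrm{Prec}_t]$ points the wrong way. Once a path picks an invalid token, all later local precisions are $0$. In a length-$2$ task keeping one valid and one invalid token at each step, the left side is $1/4$ and the right side is $1/8$.

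\textbf{The missing idea is to state hardness for the deterministic per-step factors of an exact decomposition, not for individual prefixes.} Applying the chain rule to $Q_S(F_d)$ and $U_V(E_d)$, the paper writes $\mathrm{Prec}_{\mathrm{seq}}(S)=\prod_t\alpha_t(S)$ and $\mathrm{Rec}_{\mathrm{seq}}(S)=\prod_t\beta_t(S)$. Here $\alpha_t(S)$ is the $Q_S$-expectation of $\mathrm{Prec}_t$ given that all earlier tokens were valid, and $\beta_t(S)$ is the $U_V$-expectation of $\mathrm{Rec}_t$ given that all earlier tokens were retained. A step is called $(\eta,\rho)$-hard if, for every rule, $-\log\alpha_t(S)\le\eta$ forces $-\log\beta_t(S)\ge\rho$.

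The proof is then a budget count. Since $\sum_t -\log\alpha_t(S)\le\log\frac{1}{1-\delta}$, at most $\eta^{-1}\log\frac{1}{1-\delta}$ hard steps can exceed $\eta$. Each remaining hard step adds $\rho$ to $-\log\mathrm{Rec}_{\mathrm{seq}}(S)$, which gives $c=\rho$ and $C=\rho/\eta$.

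The indices being counted are fixed steps, so there is no path-dependent $K$ and no transfer between measures. The rule's adaptivity is absorbed into the hypothesis. That hypothesis is not implied by yours: in the example above, $\alpha_t(S)\to 1$ and $\beta_t(S)\to 1$ at late steps, so those steps are not hard for that $S$.
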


\begin{wrapfigure}{r}{0.43\textwidth}
\centering
\begin{minipage}{0.41\textwidth}
\vspace{-3mm}
\includegraphics[width=\linewidth]{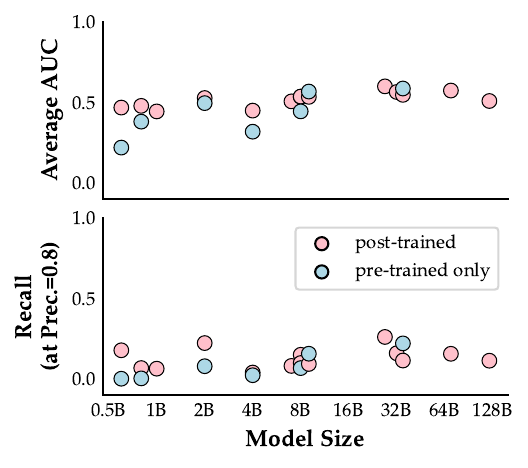}
\vspace{-6mm}
    \caption{
    Precision--recall trade-offs across \texttt{Qwen-3}, \texttt{Llama-3}, \texttt{Olmo-3} on 9 sizes and training stage. 
    Evaluations are averaged on 3 random positions and queries.
    \textbf{(Top)} Average area under the precision--recall frontier.
    \textbf{(Bottom)} Average recall at precision \(0.8\).
    % Across both metrics, larger values indicate better order calibration and a weaker local precision--recall trade-off. 
    }
    \label{fig:model_size}

  \vspace{0.5em}

\centering
\captionof{table}{Semantic and Lexical Diversity of Cutoff Strategies. Higher Embedding Diversity score corresponds to higher semantic diversity; lower Self-BLEU score corresponds to higher lexical diversity.}
\label{tab:results}
\renewcommand{\arraystretch}{0.7}
\setlength{\tabcolsep}{2.4pt}
\resizebox{1.0\textwidth}{!}{
\begin{tabular}{lcc}
\toprule
Strategy & Emb. Diversity ($\uparrow$) & Self-BLEU ($\downarrow$) \\
\midrule
oracle & $\bf{0.40_{\pm 0.15}}$ & $\bf{0.69_{\pm 0.21}}$ \\
top-$k$ & $0.33_{\pm 0.14}$ & $0.71_{\pm 0.17}$ \\
min-$p$ & $0.29_{\pm 0.13}$ & $0.80_{\pm 0.16}$ \\
top-$p$ & $0.25_{\pm 0.12}$ & $0.86_{\pm 0.13}$ \\
no filtering & $0.25_{\pm 0.11}$ & $0.86_{\pm 0.12}$ \\
\bottomrule
\end{tabular}}
\vspace{-5mm}
\end{minipage}
\vspace{-1mm}
\end{wrapfigure}
\paragraph{Interpretation.} The theorem formalizes a compounding effect. High sequence-level validity leaves only a small total budget for local precision errors. Hence, at most \(O(\log(1/(1-\delta)))\) decoding steps can tolerate such trade-offs; the remaining steps must use near-perfect cutoffs. If valid and invalid tokens are interleaved, such cutoffs necessarily discard a constant fraction of valid continuations. These losses compound multiplicatively, so the reachable valid set shrinks exponentially. In this sense, any cutoff rule can maintain validity only by sacrificing broad validity, thereby proving a hardness result for any decoding strategy that relies on top-token filtering. The details of the proof are in Appendix~\ref{app:mult}.
% \begin{wraptable}{r}{0.4\textwidth}
% \end{wraptable}
\subsection{Scaling and the diversity gap}
Sections~\ref{sec:order_testbed} and~\ref{sec:order_compounding} establish local precision--recall as a meaningful diagnostic for order calibration. We now use this diagnostic to quantify the practical importance of order calibration.
\paragraph{Model Size.} 
Figure~\ref{fig:model_size} summarizes the local precision--recall trade-offs across model families, scales, and training stages (see Appendix~\ref{app:pr-model-size}). The average AUC of the precision--recall frontier exhibits a mild upward trend with model size. However, the improvement is modest and far from eliminating the trade-off. In particular, when precision is fixed at \(0.8\), recall remains low and non-monotonic across model sizes. This shows that larger models do not reliably recover more valid continuations under a high-precision constraint. Therefore, while scale slightly improves order calibration, it does not by itself resolve the failure in order calibration.

\paragraph{Oracle filtering.} To quantify how much diversity is lost specifically due to order miscalibration, we present an oracle validity filter that samples only from tokens labeled valid by our diagnostic procedure. We apply the oracle filter only during the first two decoding steps and then continue generation normally. For other strategies, we sweep a grid of temperatures and cutoff parameters and report the best diversity for validity at least $0.8$ (see Appendix~\ref{app:oracle-cutoff}).

Table~\ref{tab:results} shows that even this limited oracle intervention yields a clear improvement: oracle filtering achieves the highest embedding diversity and the lowest Self-BLEU. This indicates that valid tokens excluded by rank-based cutoffs yield meaningfully different generations. Therefore, the order calibration gap has direct output-level consequences: valid alternatives are present in the model distribution, but standard rank-based sampling rules fail to reliably expose them.

\section{Shape Calibration Fails: Sharp, heavy-tailed distributions limit diversity}
\label{sec:strong_calibration}

In this section, we focus on a second, complementary bottleneck, namely shape calibration. While the LLM conditional distribution varies across tasks, prior work suggests that next-token distributions are typically head-heavy and long-tailed, with most probability mass concentrated in a relatively small nucleus and a large, unreliable tail~\citep{truncation, Holtzman2020The}. In Appendix~\ref{app:logits_fit}, we randomly sample conditional next-token distributions from a diverse set of tasks and examine their sorted logits. We show that the logits' behavior can be consistently described by a linear decay in the head, followed by a heavy-tailed distribution that decays logarithmically. After applying softmax, this translates to an exponential (geometric) decay in the head, i.e., $p(v_k\mid y_{<t}) \propto \exp(-\lambda k/T)$ and a \emph{Zipf-like}~\citep{basu2021mirostat} behavior in the tail, $p(v_k\mid y_{<t}) \propto k^{-\lambda/T}$, where $k$ denotes the rank of token $v_k$ in the sorted vocabulary, $T$ is the temperature and $\lambda$ controls the sharpness of the distribution.

These two properties together cause the shape calibration issue: the distribution is always very sharp over a very small portion of the head, even in tasks where we expect to observe an exact uniform distribution on valid tokens. Temperature scaling is therefore often used to flatten the distribution head, increasing the probability of valid regions outside the head. However, temperature scaling comes with a necessary caveat. Although each invalid token remains with a small probability, the accumulated probability of invalid tokens especially grows very quickly with temperature scaling, leading to an unwanted validity--diversity trade-off.

While understanding how distributional miscalibration arises from LLM training and design pipeline is an important question~\citep{finlayson2024closing, chang-mccallum-2022-softmax}, we focus on its implications for the diversity-validity trade-off. We emphasize that characterizing the exact geometric form is not intended as a literal empirical claim, but as an analytically convenient proxy for heavy-tailed distributions.

\subsection{How severely does distribution shape affect validity--diversity trade-off?}
\label{sec:temp_trade-off}
Although temperature is the most basic way of injecting randomness into decoding, in practice, higher temperatures often fail to recover a broad set of valid outputs. 
Our goal in this subsection is to characterize this limitation by attributing the failure to distribution sharpness. In fact, we show that even if the order calibration disappears, sharp LLM conditionals induce heavy validity--diversity trade-offs. For simplicity of the proof, we further impose the following assumption: % on the valid set size at each step.
\begin{assumption}[Invariant valid branching]
\label{assump:invariant_valid_branching}
Assume that all valid sequences for a task have a fixed length $d$. For every valid prefix \(y_{<t}\), the number of valid next-token choices depends only on the position \(t\), not on the particular prefix. That is, there exist integers \(v_1,\ldots,v_d\) such that for every valid prefix $y_{<t}$,
\[
    |G(y_{<t})| = v_t.
\]
Furthermore, we define the branching length as the number of positions at which there is more than one valid continuation:
\[
    m
    :=
    \left|\{t\in[d]: v_t\ge 2\}\right|.
\]
\end{assumption}
Assumption~\ref{assump:invariant_valid_branching} fixes the generation length and removes prefix-level heterogeneity, allowing the diversity loss to be expressed in terms of the effective branching length $m$.

\begin{theorem}[Validity--Diversity trade-off]
\label{thm:length}
Consider a length-\(d\) generation task with a valid set \(V\). Suppose that, at each valid prefix, the model's ranked next-token distribution is geometrically decaying, and suppose the valid next tokens occupy the top ranks. Then any temperature-scaled distribution satisfying
\[
    \mathrm{Val}(p)\ge 1-\epsilon
\]
also satisfies
\[
    \mathrm{Div}(p)
    \le
    e^{-m c(\epsilon)},
\]

for some positive constant \(c(\epsilon)>0\), where $c(\epsilon)\to \ln 2$ as $\epsilon\to0$.
\end{theorem}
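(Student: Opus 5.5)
The plan is to exploit the product structure that Assumption~\ref{assump:invariant_valid_branching} gives us. Both $\mathrm{Val}(p)$ and $\mathrm{Div}(p)$ factor over positions, so the whole statement reduces to a one-step inequality for a truncated geometric distribution.

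\textbf{Setup.} Fix a temperature $T$ and write the temperature-scaled ranked conditional at every valid prefix as $p(v_k\mid y_{<t})\propto r^{k-1}$, where $r=e^{-\lambda/T}\in(0,1)$. By hypothesis the $v_t$ valid tokens occupy ranks $1,\dots,v_t$.

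\textbf{Factorization.} The step-$t$ validity mass is $q_t:=\sum_{k\le v_t}p(v_k\mid y_{<t})$. With an infinite (or very long) tail, $q_t=1-r^{v_t}$; for a finite vocabulary of size $M$ it is the analogous expression $1-r^{v_t}(1-r^{M-v_t})/(1-r^M)$. Because $q_t$ is the same at every valid prefix of length $t-1$, summing over valid sequences gives $\mathrm{Val}(p)=\prod_t q_t$.

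Conditioned on $Y\in V$, the distribution $\tilde p$ is a product of independent per-step truncated geometric laws $\pi_{v_t,r}(k)\propto r^{k-1}$, $k\le v_t$. This again uses that each conditional depends only on $t$. Hence $H(\tilde p)=\sum_t H(\pi_{v_t,r})$. Since $|V|=\prod_t v_t$,
\[
\mathrm{Div}(p)=\prod_{t}\frac{e^{H(\pi_{v_t,r})}}{v_t}.
\]
Positions with $v_t=1$ contribute a factor of $1$. The claim therefore follows once we show $e^{H(\pi_{v,r})}/v\le e^{-c(\epsilon)}$ for every $v\ge 2$ whenever the validity constraint holds.

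\textbf{Per-step constraint.} Since each $q_t\le 1$, the global condition $\prod_t q_t\ge 1-\epsilon$ forces $q_t\ge 1-\epsilon$ at every branching position. In the infinite-tail case this means $r^{v}\le\epsilon$, i.e.\ $r\le\epsilon^{1/v}$; the finite-vocabulary case gives a slightly weaker but analogous bound. The entropy $H(\pi_{v,r})$ is increasing in $r$, because raising $r$ flattens the law toward uniform. So it suffices to bound
\[
g_\epsilon(v):=\frac{e^{H(\pi_{v,\epsilon^{1/v}})}}{v},
\]
and to set
\[
c(\epsilon):=-\ln\sup_{v\ge2}g_\epsilon(v).
\]

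\textbf{Main obstacle: uniformity in $v$.} We need $c(\epsilon)>0$ uniformly over all $v\ge2$, and $c(\epsilon)\to\ln 2$. Three observations handle this.
\begin{enumerate}[left=2ex, topsep=1pt, itemsep=1pt, parsep=1pt]
\item For each fixed $v$ we have $g_\epsilon(v)<1$, because $\pi_{v,r}$ is non-uniform whenever $r<1$.
\item As $v\to\infty$, $g_\epsilon(v)\to0$. I would verify this via the continuum approximation: $\pi_{v,\epsilon^{1/v}}$ behaves like an exponential law of rate $a=\ln(1/\epsilon)/v$ truncated to $[0,v]$. Its entropy is at most $1-\ln a+o(1)$, which gives $g_\epsilon(v)\lesssim e/\ln(1/\epsilon)$. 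A rigorous version should follow by comparing $H$ with the entropy of the untruncated geometric law, whose closed form is $-\ln(1-r)-\frac{r\ln r}{1-r}$, using the fact that truncation to the top $v$ ranks can only reduce entropy after renormalization up to an additive $O(\epsilon)$ term.
\item Together, (1) and (2) show the supremum is attained at some finite $v$ and is strictly less than $1$, so $c(\epsilon)>0$.
\end{enumerate}

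\textbf{The limit $\epsilon\to0$.} For fixed $v$, $r=\epsilon^{1/v}\to0$, so $\pi_{v,r}$ concentrates on rank $1$ and $g_\epsilon(v)\to 1/v\le 1/2$. Combined with the uniform tail bound $g_\epsilon(v)\lesssim e/\ln(1/\epsilon)$, which is eventually below $1/2$, this gives $\sup_v g_\epsilon(v)\to 1/2$, i.e.\ $c(\epsilon)\to\ln2$. The supremum is asymptotically attained at $v=2$.

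Multiplying the $m$ per-step factors then yields $\mathrm{Div}(p)\le e^{-mc(\epsilon)}$. The step needing the most care is the explicit, $\epsilon$-dependent control of $g_\epsilon(v)$ for intermediate $v$, so that the supremum is genuinely below $1$ rather than only pointwise below $1$. If the continuum comparison becomes messy, the fallback is to compute $H(\pi_{v,r})$ exactly as a finite geometric sum and bound it directly.
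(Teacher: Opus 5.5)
\paragraph{Comparison with the paper.} Your reduction is the paper's argument. Your $\pi_{v,\epsilon^{1/v}}$ is the paper's tilted law $p_{v,L}(i)\propto e^{-Li/v}$ with $L=\ln\epsilon^{-1}$, so $g_\epsilon(v)=e^{-\Delta_v(L)}$ with $\Delta_v(L)=\ln v-H(p_{v,L})$. The factorization of $\mathrm{Val}$, the per-step bound $r^{v_t}\le\epsilon$, the monotonicity of entropy in sharpness, and the chain rule all match the paper's proof.

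The only divergence is the last step. The paper sets $c(\epsilon)=\min_{2\le v\le|\mathcal V|}\Delta_v(L)$. This is a minimum of finitely many positive numbers. Since $\Delta_v(L)\to\ln v$ for each fixed $v$, the limit passes through the finite minimum and gives $\ln 2$. Because $v_t\le|\mathcal V|$ anyway, you can do the same, and your ``main obstacle'' disappears.

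Keep the infinite-tail form $q_t=1-r^{v_t}$ that the paper uses, though. With the normalized finite-vocabulary law, $q_t=(1-r^{v_t})/(1-r^{|\mathcal V|})$, the constraint is vacuous when $v_t=|\mathcal V|$. For example, take $d=1$: then $\mathrm{Val}=1$ for every $T$ while $\mathrm{Div}\to1$ as $T\to\infty$. So that case is not ``slightly weaker but analogous.''

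\paragraph{The gap in your uniformity step.} Your vocabulary-free argument, as written, does not go through.

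Claim (2) is false. For fixed $\epsilon$, $p_{v,L}$ discretizes the density $\propto e^{-Lx}$ on $[0,1]$. Hence $g_\epsilon(v)\to e^{h_L}$, where $h_L=1+\ln\frac{1-\epsilon}{L}-\frac{\epsilon L}{1-\epsilon}<0$ is that density's differential entropy. The limit lies in $(0,1)$ and is not $0$.

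Your heuristic $e/L$ has two further problems. It does not decay in $v$, and it is $\ge 1$ whenever $\epsilon\ge e^{-e}$, so it certifies nothing in the weak-validity regime. It is also not uniform in $v$: a bound $g_\epsilon(v)\lesssim e/L$ for all $v\ge2$ would force $g_\epsilon(2)\to0$, contradicting $g_\epsilon(2)\to\frac12$. It holds only for $v\gg L$. So in your $\epsilon\to0$ argument, the range $3\le v\lesssim L$ is covered neither by pointwise convergence nor by the continuum bound.

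\paragraph{How to get a vocabulary-free constant.} What is missing is an exact identity in place of the ``$O(\epsilon)$'' comparison. With $r^v=\epsilon$, grouping plus memorylessness of the geometric tail give
\[
H(p_{v,L})=H(\mathrm{Geom}(r))-\frac{h_b(\epsilon)}{1-\epsilon},
\]
where $h_b$ is binary entropy in nats. Hence
\[
g_\epsilon(v)=\frac{\phi(L/v)}{L}\,e^{-h_b(\epsilon)/(1-\epsilon)},
\qquad
\phi(a)=\frac{a}{1-e^{-a}}\exp\bigl(\tfrac{a}{e^a-1}\bigr).
\]
Moreover $(\ln\phi)'(a)=\frac1a-\frac{ae^a}{(e^a-1)^2}\ge0$, because $2\sinh(a/2)\ge a$. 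So $g_\epsilon$ is decreasing in $v$ and the supremum sits at $v=2$. Therefore $c(\epsilon)=\Delta_2(L)=\ln2-h_b\bigl(\frac{\sqrt\epsilon}{1+\sqrt\epsilon}\bigr)$ works for all $v\ge2$, independently of $|\mathcal V|$, and visibly tends to $\ln 2$. This is slightly stronger than the paper's $c_{\mathcal V}(\epsilon)$.
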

\paragraph{Interpretation.}
The theorem shows that temperature scaling pays a local diversity price at every branching step. To achieve high validity, the distribution must be sharp enough that invalid tokens receive little mass. At the same time, this sharpness makes the conditional distribution over valid tokens non-uniform, concentrating probability on the highest-ranked valid continuations. These per-step entropy losses add across the sequence; after exponentiating entropy to obtain diversity, they yield an exponential decay in $m$. Moreover, the rate \(c(\epsilon)\) increases as the validity requirement becomes stricter. In the high-validity regime, the bound approaches $m$, and can be stronger when many steps contain several valid continuations. We provide the formal proof in Appendix~\ref{app:length_proof}.
\begin{figure}[t]
    \centering
      \begin{subfigure}[!t]{0.41\textwidth}
        \includegraphics[width=1.0\linewidth]{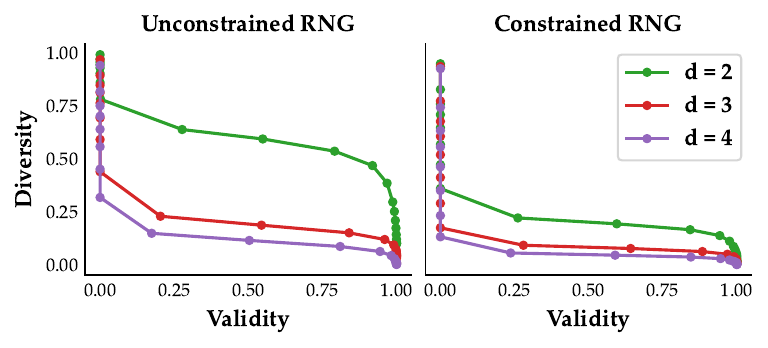}
      \end{subfigure}
       \hfill
    {\color{gray}\vrule width 0.2pt}
    \hfill
      \begin{subfigure}[!t]{0.52\textwidth}
      \includegraphics[width=\linewidth]{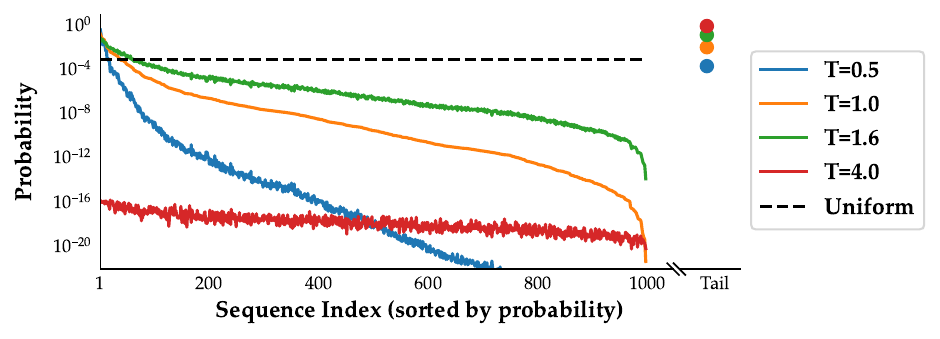}
      \end{subfigure}
    \caption{Effects of temperature scaling in random number generation on \texttt{Olmo-3-7B-Instruct}. \textbf{(Left)} Validity--diversity trade-offs for unconstrained and constrained random-number generation tasks across sequence lengths \(d\). Longer generations exhibit a stronger validity--diversity trade-off. \textbf{(Right)} Valid sequence probabilities for the length-\(3\) unconstrained random-number generation task. }
    \label{fig:temp_scaling}
\end{figure}
\subsection{Empirical Investigation}
\paragraph{Controlled random generation testbed.} Investigating the diversity--validity trade-off as a function of distribution shape requires us to calculate the probabilities of every valid sequence. Sweeping all the conditionals on open-ended generation is infeasible. To address this, we propose two tasks with known valid sets as controlled testbeds: random-number generation and naming a random US state. We consider two variants of the random number generation task in Figure~\ref{fig:temp_scaling}. In the \emph{unconstrained} setting, the model is asked to generate a length-\(d\) sequence of i.i.d. digits, where each digit lies in \(\{0,\dots,9\}\). Thus, every sequence in \(\{0,\dots,9\}^d\) is valid. We expect the model to impose a uniform distribution on each conditional, given independence. In the \emph{constrained} setting, the model is asked to generate a length-\(d\) sequence of digits whose sum equals a specified target \(N\). Appendix~\ref{app:shape_calibration} provides further details, including exact prompts and our experimental setup.

These tasks offer several key advantages. First, we can compute the exact generation probability for each valid sequence, since the entire valid set is known. Second, we objectively expect each valid sequence to have the same generation probability. Thus, any deviation from uniform distribution leads to a systematic validity--diversity trade-off. Moreover, the unconstrained random-number generation task satisfies Assumption~\ref{assump:invariant_valid_branching}, serving as a proper testbed for our theorem.

\paragraph{Sequence validity--diversity trade-offs.} For each candidate sequence in $V$, we therefore feed the corresponding prefixes into the LLM and extract the next-token logits. We then apply different temperatures to these logits and compute, exactly over the known sequence space, both the probability mass assigned to the valid region and the entropy of the model's distribution conditioned on validity. This gives the validity--diversity curve induced by temperature scaling alone.

The left panel shows the validity--diversity frontier for random-number generation across sequence lengths \(d\in\{2,3,4\}\). As the length increases, the frontier becomes sharper: maintaining high validity requires a larger reduction in diversity. This is consistent with Theorem~\ref{thm:length}, where each branching position contributes a local entropy loss and these losses compound across the sequence. The constrained setting exhibits an even stronger trade-off, despite violating Assumption~\ref{assump:invariant_valid_branching}, suggesting the broader applicability of our result.
%This finding is particularly surprising in the unconstrained case: every next random number should be independent from the last, hence, it is surprising to see worsened trade-off with increasing sequence length. 

\paragraph{Shape calibration in sequence-level.}
Although we have studied the shape of the conditional distribution, its implications at the sequence level are less transparent. The right panel of Figure~\ref{fig:temp_scaling} provides a sequence-level view of temperature scaling trade-off. We plot the probabilities of each valid sequence in the unconstrained length-\(3\) task, and show the magnitude of the invalid sequence mass. We observe that the distribution is already highly concentrated: a small number of sequences receive orders-of-magnitude larger probability than the rest. At the same time, many sequences lie in a long, low-probability tail. Raising the temperature can indeed move probability mass toward this tail, thereby increasing diversity among rare valid outputs. However, the invalid sequence mass dominates even more as temperature increases. Therefore, shifting mass toward the tail improves diversity only at the cost of reduced validity. Thus, the empirical behavior mirrors the theory: temperature can flatten the distribution, but it cannot selectively recover valid diversity.

\subsection{When shape and order miscalibration interact}
\label{sec:top_token_filtering}

Top-token filtering is often applied after temperature scaling to suppress invalid tail mass. However, this does not remove the calibration problem; it couples shape calibration with order calibration. A cutoff rule retains a prefix of the ranked distribution, so to preserve valid diversity while maintaining validity, this prefix must approximate the valid-token set \(G(y_{<t})\). This requires two conditions: valid tokens must be concentrated near the top of the ranked distribution, and the cutoff rule must adapt to the local boundary of the valid set. Different methods encode different boundary assumptions: top-\(k\) assumes a roughly fixed support size, top-\(p\) assumes a stable cumulative-mass boundary, and min-\(p\) assumes a stable relative-probability gap from the top token.

\paragraph{Comparing validity--diversity trade-offs.} We test these assumptions on controlled tasks, including random number generation and random state generation tasks, where the valid set is known exactly. Figure~\ref{fig:state_top_token_filtering} reports the trade-offs on both tasks, and more results are found in Appendix~\ref{app:shape_calibration}. For each cutoff strategy, we sweep both temperature and the method-associated parameter, since different methods can achieve their best validity--diversity trade-off at different temperatures~\citep{zhou2025balancingdiversityriskllm}. Comparing methods at a single fixed temperature can therefore be misleading: poor performance may reflect a bad choice of temperature rather than a limitation of the filtering rule itself.

\paragraph{Cutoff Oracle Strategy.} To study the miscalibration, we include a cutoff oracle strategy in Figure~\ref{fig:state_top_token_filtering}. Given that our generation task has a known ground truth, we directly compute the valid size at each generation step. At each step, the oracle cutoff rule knows only the number of valid tokens,
\(
    g_t^\star = |G(y_{<t})|,
\)
and retains the top \(g_t^\star\) ranked tokens. However, the oracle does \textit{not} know which tokens are valid, and therefore remains a rank-based cutoff rule. On the random state generation task, the cutoff oracle achieves the ideal point $(1.0, 1.0)$, indicating that the model is order-calibrated in this controlled setting. Therefore, the remaining gap reflects a failure of the methods' implicit shape assumptions: fixed top-$k$, cumulative-mass, or relative-probability thresholds do not identify the valid-token boundary. On the random-number generation task, the cutoff oracle also does not immediately achieve high diversity, showing that shape miscalibration is coupled with order miscalibration. 

\begin{figure}[!t]
  \centering
  \begin{minipage}[t]{0.55\linewidth}
    \vspace{0pt}
    \centering
    \includegraphics[
      width=\linewidth,
      trim={0 2mm 0 0},
      clip
    ]{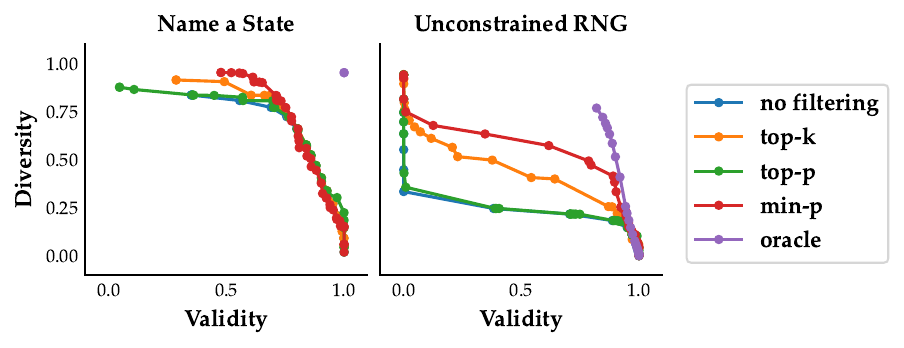}
  \end{minipage}
  \hfill
  \begin{minipage}[t]{0.44\linewidth}
    \vspace{0pt}
    \captionsetup{
      justification=justified,
      singlelinecheck=false,
      font=small
    }
    \caption{
    Validity--diversity Pareto frontiers for top-token filtering methods on generation tasks from \texttt{Llama-3.1-8B-Instruct}. \textbf{(Left)} Name a random state in the US. \textbf{(Right)} Unconstrained random number. Each sampling method is swept over both temperature and its own truncation parameter. The oracle-size cutoff retains the top \(|G(y_{<t})|\) tokens at each prefix.}
    \label{fig:state_top_token_filtering}
  \end{minipage}
\end{figure}

\section{Future Work}
This paper attributes the LLM validity--diversity trade-off to two distributional properties: order and shape calibration. Through empirical demonstrations and theoretical analysis, we show that miscalibration is a recurring bottleneck across model families, sizes, and controlled generation tasks. Finally, as a preliminary piece of future work, in Appendix~\ref{app:coding} we observe that order and shape calibration can potentially provide insights into domains beyond open-ended diversity generation.

Several directions remain open. First, our results suggest that future decoding methods should move beyond top-token filtering rules. Calibration-aware decoders could instead incorporate auxiliary validity signals. Second, it remains important to understand where these calibration failures come from. Pretraining, instruction tuning, preference optimization, and safety
alignment may each affect the sharpness of the distribution and the rank ordering of valid alternatives. Studying these effects could suggest training objectives that preserve broader valid support without sacrificing quality.
%%%%%%%%%%%%%%%%%%%%%%%%%%%%%%%%%%%%%%%%%%%%%%%%%%%%%%%%%%%%

% \begin{ack}
% Use unnumbered first level headings for the acknowledgments. All acknowledgments
% go at the end of the paper before the list of references. Moreover, you are required to declare
% funding (financial activities supporting the submitted work) and competing interests (related financial activities outside the submitted work).
% More information about this disclosure can be found at: \url{https://neurips.cc/Conferences/2026/PaperInformation/FundingDisclosure}.

% Do {\bf not} include this section in the anonymized submission, only in the final paper. You can use the \texttt{ack} environment provided in the style file to automatically hide this section in the anonymized submission.
% \end{ack}

\newpage
\bibliographystyle{plainnat}   % common choice
\bibliography{main}      % references.bib

% %%%%%%%%%%%%%%%%%%%%%%%%%%%%%%%%%%%%%%%%%%%%%%%%%%%%%%%%%%%%
\newpage 

\appendix
\section{LLM-as-a-judge Details}
\label{app:llm-as-a-judge}
\subsection{Prompt and Model}
\label{app:llm-as-a-judge-prompts}
Token validity is scored based on the greedy-decoding completion. We use \texttt{Qwen3.5-35B-A3B-FP8} with thinking enabled as our judge. The prompts used are documented below. 

\begin{promptbox}{LLM-as-a-judge Prompts}
(*@{\normalfont\bfseries System Prompt}@*) 
You are a strict language quality judge. You will first see a question and a generated answer. Your task is to evaluate the quality of the generated answer.

(*@{\normalfont\bfseries Evaluation Instructions}@*)
Evaluate the quality of the generated answer based on the following criteria:
1) grammar, spelling and punctuation,
2) semantic soundness, validity, and relevance to the question,
3) overall quality.

When evaluating grammar, check for spelling mistakes, punctuation errors, and grammatical issues. If spaces are missing between words, extra punctuations in the middle of sentences, or incorrect capitalization, that should be considered a grammar error. 

Additionally, if the generation contains non-English characters, that should be considered a grammar error and scored low on grammar, even if the rest of the generation is grammatically correct.

When evaluating semantics, check if the answer is relevant to the question, if it makes sense, and if it is factually correct. An answer that is fluent but does not make sense or is not relevant to the question should be scored low on semantics.

Be concise and nitpicky in your evaluation. Short responses are acceptable, as long as they are grammatically correct and semantically sound.

First provide an explanation of your evaluation, then give a score for each category. Score each category on a scale of 1 to 10, where 1 is very poor and 10 is excellent.

Any non-English text in the generation should be considered a grammar error and scored low on grammar, even if it's semantically correct.

The generation does not need to be fully finished to be considered as valid. Only look at the part of the generation that is present and evaluate that. If the generation is cut off in the middle of a sentence, evaluate the part that is present and ignore the fact that it is cut off.

Return only valid JSON with this exact schema:
{
  "reason": <brief explanation>,
  "grammar": <int 1-10>,
  "semantic": <int 1-10>,
  "overall": <int 1-10>
}

Here is the question:
```
{question}
'''

And here is the generated answer to evaluate:
```
{shot}
'''

Your evaluation:
\end{promptbox}

\subsection{Evaluating LLM-as-a-judge}
\label{app:llm-as-a-judge-results}
To assess the inter-annotator reliability, three authors independently annotated 100 generations using the rubric in Appendix~\ref{app:llm-as-a-judge-prompts}. 

We compute inter-annotator agreement on the overall validity scores using Krippendorff's alpha~\citep{Krippendorff1980ContentAA}, obtaining $\alpha = 0.759$. This indicates reasonably strong agreement among annotators, suggesting that the rubric yields consistent human judgments.

To contrast our LLM-based evaluation against human judgment, we compared \texttt{Qwen3.5-35B-A3B-FP8} predictions against human labels from different annotators for 100 generations. We use a threshold of $9$ on grammar, semantic and overall score. We treat the judge model score above this threshold as a positive prediction (valid token) and scores below the threshold as negative predictions (invalid tokens). Under this rule, \texttt{Qwen3.5-35B-A3B-FP8} achieves $80.90$\% accuracy and $79.60$\% F1 when compared to humans. We also compare \texttt{GPT-5.4} against \texttt{Qwen3.5-35B-A3B-FP8} under the same rule, obtaining $73.40$\% accuracy and $61.45$\% F1. These results suggest that our LLM-based evaluation can approximate human judgment and \texttt{GPT-5.4}.

\subsection{Greedy Decoding Robustness}
\label{app:greedy_vs_sampling}
Our token-validity annotation procedure uses greedy decoding after forcing a candidate token, treating the validity of the resulting completion as a proxy for whether that token preserves access to a valid continuation. One concern is that a token deemed invalid under greedy decoding may still admit a valid sampled continuation, and vice versa. To test this, Figure~\ref{fig:sampling_analysis} reports an ablation in which we replace the greedy continuation with model suggested sampling parameters when constructing token-level validity labels. We sample 10 continuations for each token, and then recompute the resulting precision--recall curves under the same cutoff strategies. The curves are nearly unchanged, indicating that the estimated precision and recall are not driven by the particular choice of greedy decoding. This supports using greedy continuation as a computationally efficient proxy for token validity in the main experiments.

\begin{figure}[ht]
    \centering
    \includegraphics[width=0.5\linewidth]{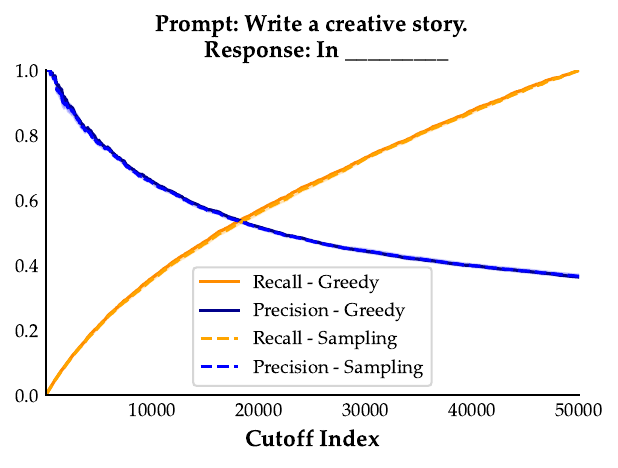}
    \caption{\textbf{Robustness of token-validity estimates to the continuation procedure.} We compare local precision--recall curves computed from token labels obtained by greedy decoding after each forced candidate token with labels obtained from stochastic sampling. The resulting curves nearly overlap across cutoff indices, indicating that our estimated precision--recall trade-off is not sensitive to using greedy decoding as the continuation procedure.}
    \label{fig:sampling_analysis}
\end{figure}
\section{Order Calibration Trade-offs}
\label{app:pr}

\subsection{Generation Setup}
\label{app:pr-setup}

We use a variety of LLMs in our experiments, each documented in the subsection below. We randomly select 5 open-ended generation categories from NoveltyBench, ranging from story-telling, joke-telling, poems, and item selection \citep{zhang2025noveltybench}. 

% \begin{promptbox}{Open-ended Generation Prompts}
% 1. Write a brief creative story.
% 2. Write a short love poem with 4 lines.
% 3. Tell me a dad joke.
% 4. What is the top item you would add to a grocery list for a memorable shopping experience?
% \end{promptbox}
\subsection{Sweep Generation Experiment}
\label{app:pr-examples}
Experiments were repeated 10 times for reproducibility. The cost of exhaustive sweeping grows as $N^d$, where $d$ is the sweeping depth and $N$ is the number of candidate tokens evaluated at each conditional. To keep the oracle evaluation computationally tractable, we sweep only three consecutive conditionals. At each conditional, we consider candidates up to rank $1000$ and subsample every tenth token, yielding $N=100$ candidates per conditional and $100^3$ evaluated branches in total. 

In Table~\ref{tab:story-generation-examples}, we provide examples of tokens swept in the story generation task. We sweep up to depth 3 from randomly selected starting token positions. All examples are generated with \texttt{Qwen3.5-35B}.

\begin{table}[ht]
\centering
\caption{
Generation examples from the creative story generation task. The sweeping is performed on the second token after ``In''. For each candidate token, we force the model to continue from the prefix with that token, greedily decode
the remaining sequence, and evaluate the resulting completion with the judge.
}
\label{tab:story-generation-examples}
\small
\setlength{\tabcolsep}{3.5pt}
\renewcommand{\arraystretch}{1.15}
\begin{tabular}{c c p{0.62\linewidth} c}
\toprule
Index & Token & Generated sequence & Judge Score \\
\midrule
1  & \textcolor{validgreen}{\texttt{2045}} & \textbf{In 2045}, the last library was a single glass pod floating above the clouds, where an old librarian read stories to a child who had never seen a paper book. & 10 \\
43  & \textcolor{invalidred}{\texttt{edible}} & \textbf{Inedible}, the clock on the wall began to tick backwards, unspooling the day until the coffee cup in my hand was whole again, the steam rising into the past. & 8 \\
201  & \textcolor{validgreen}{\texttt{Obsidian}} & \textbf{In Obsidian} City, the streetlamps were made of captured starlight, and the baker sold loaves of warm silence to quiet the noisy crowds. One rainy Tuesday, a little girl bought a loaf, but when she bit into it, she heard the sound of a thousand birds singing in a language she had never known. She smiled, realizing the city wasn't just quiet; it was waiting for her to listen.  & 10 \\
208  & \textcolor{invalidred}{\texttt{eeded}} & \textbf{Ineeded} to write a story, so I did. & 7 \\
507  & \textcolor{validgreen}{\texttt{ebriated}} & \textbf{Inebriated} by the scent of rain on hot asphalt, the old streetlamp blinked its last, dreaming of a sky it would never see again. & 10\\
508  & \textcolor{invalidred}{\texttt{\zh{忘却}}} & \textbf{In\zh{忘却}} of a forgotten clock tower, a tiny mouse named Pip found a gear that could rewind time by one second. Every day, he used it to fix his mistakes, until he realized he had no memories of the past, only the endless, perfect present. & 5\\
10002  & \textcolor{validgreen}{\texttt{Eco-Station}} & \textbf{In Eco-Station} 42, the last human, Elara, watered a single, glowing fern that hummed a lullaby to the empty room. & 10\\
23606  & \textcolor{invalidred}{\texttt{gone}} & \textbf{In gone}, the clock struck thirteen, and the moon turned into a giant, glowing cheese wheel that the city's cats began to chase, leaving trails of stardust in their wake. & 7\\
35007  & \textcolor{validgreen}{\texttt{Darth}} & \textbf{In Darth Vader}'s castle, a tiny droid named BEEP-7 found a single, glowing blue flower growing in a crack of the floor. He spent his entire existence protecting it from the stormtroopers, watering it with his own coolant, until the day the Emperor arrived. BEEP-7 stood before the dark lord, not with a weapon, but with the flower, and for a moment, the dark side of the Force faltered, remembering a time before the fall. & 10\\
42101  & \textcolor{invalidred}{\faSnowflake} & In \faSnowflake\, the quietest corner of the world, a tiny snowflake named Pippin decided to dance instead of fall, turning the gray sky into a swirling ballroom of silver light.
 & 8\\
42273 & \textcolor{validgreen}{\texttt{cartoons}} & \textbf{In cartoons}, the sky is always blue, but in the real world, it is a canvas of shifting grays and golds. One day, a little girl named Elara found a paintbrush that could touch the sky. She dipped it in the color of her favorite sunset and painted a dragon made of fire across the clouds. The dragon roared, not with sound, but with warmth, and the world below smiled as the cold winter faded into an eternal spring. & 10 \\
\bottomrule
\end{tabular}
\end{table}

\subsection{Model Size Experiments}
\label{app:pr-model-size}

\begin{figure}[ht]
    \centering
    \includegraphics[width=0.7\linewidth]{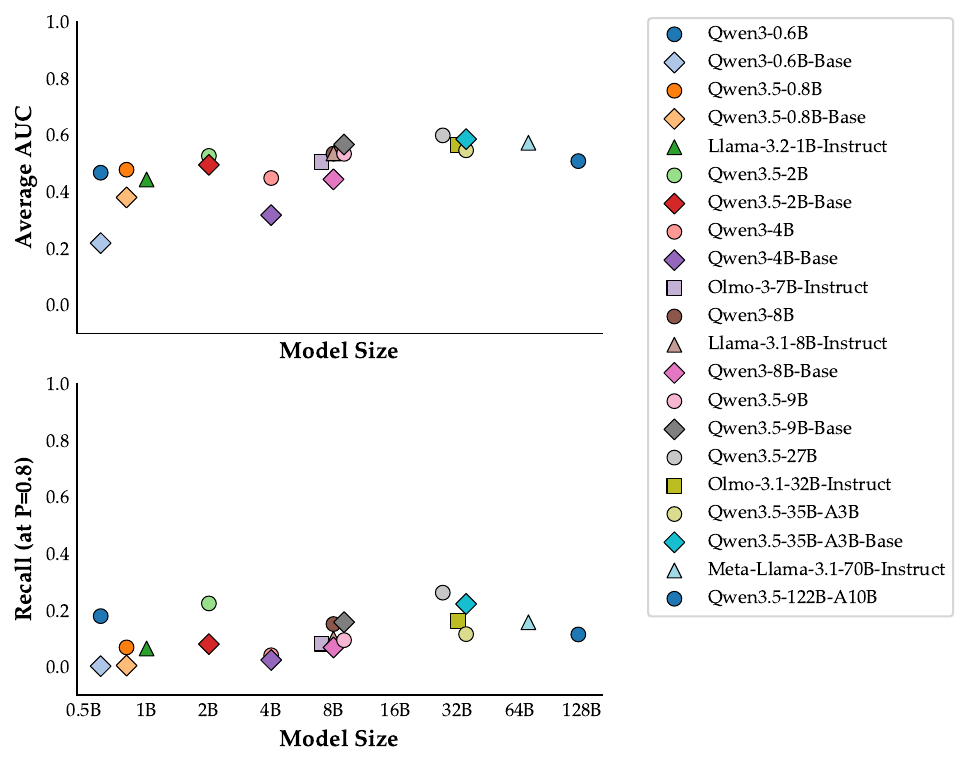}
    \caption{Detailed view of Figure~\ref{fig:model_size}. We evaluate models across 3 families with various sizes. Diamond represents pre-trained only models; circle represents post-trained \texttt{Qwen} models; square represents post-trained \texttt{Olmo} models; triangle represents post-trained \texttt{Llama} models.}
    \label{fig:model_size_ablations}
\end{figure}

Figure~\ref{fig:model_size_ablations} provides a detailed view of the models used for our experiment. 

\subsection{Oracle Sampling Experiments}
\label{app:oracle-cutoff}
For all our oracle generations, we only sweep the first two decoding steps. We sweep the first conditional to obtain 1000 valid tokens. We then uniformly sample from these 1000 tokens to obtain the second-step conditionals and repeat the sweep once.

In Table~\ref{tab:oracle_sweep}, we report the parameter ranges we swept for each cutoff strategy. For each parameter, we sample 1000 generations to compute the semantic and lexical diversity. While there are multiple methods for measuring semantic and lexical diversity, we select embedding diversity and Self-BLEU as representatives.

\begin{table}[ht]
\centering
\small
\caption{
Parameter grids used for decoding sweeps. For each parameter setting, we sample
$1000$ generations to report semantic and lexical diversity.
}
\label{tab:sweep_ranges}
\begin{tabular}{ccc}
\toprule
\textbf{Decoding strategy} & \textbf{Parameter} & \textbf{Values swept} \\
\midrule
Temperature sampling
& $T$
& $\{0.01, 0.3, 0.6, 0.8, 1.0, 1.2,$ \\
& & $\phantom{\{}1.4, 1.6, 1.8, 2.0, 2.5, 3.0\}$ \\

top-$k$
& $k$
& $\{10, 20, 50, 80, 100, 500\}$ \\

top-$p$
& $p$
& $\{0.1, 0.5, 0.7, 0.9, 0.95\}$ \\

min-$p$
& $p_{\min}$
& $\{0.01, 0.1, 0.5, 0.9\}$ \\
\bottomrule
\end{tabular}
\label{tab:oracle_sweep}
\end{table}

\paragraph{Embedding Diversity}
For each generated sequence, we obtain an embedding vector \(\mathbf e_i\) from \texttt{Qwen3-Embedding-8B}. We compute pairwise cosine distances and define embedding diversity as 
\[
    \mathrm{Embedding\ Diversity}
    =
    \frac{2}{n(n-1)}
    \sum_{i<j}  (1 - \cos(\mathbf e_i, \mathbf e_j)).
\]
The range is between $[0,1]$, with higher values indicating higher semantic diversity.

\paragraph{Self-BLEU \citep{alihosseini-etal-2019-jointly}}
Given generations $\{y_1,\dots,y_n\}$ for a single task, we compute
\[
    \mathrm{Self\text{-}BLEU}
    =
    \frac{1}{n}
    \sum_{i=1}^{n}
    \mathrm{BLEU}\!\left(y_i, \{y_j\}_{j\ne i}\right)
    \in [0,1].
\]
where BLEU \citep{papineni-etal-2002-bleu} measures the \(n\)-gram overlap
between a candidate output and a set of reference outputs. In Self-BLEU, lower values indicate higher lexical diversity. We use $n=4$ in our experiments.

\section{Shape Calibration Trade-offs}
\label{app:shape_calibration}
\subsection{Random Number Generation}

\label{app:rng}
\begin{promptbox}{Random Number Generation Prompts}
(*@{\normalfont\bfseries Unconstrained Generation}@*) 
Generate {n} random integers between {start} and {end} (inclusive). Your response should be {n} integers separated by commas and no white spaces. Answer:

(*@{\normalfont\bfseries Constrained Generation}@*) 
Generate {n} random integers between {start} and {end} (inclusive) that sum to at most {sum_value}. Your response should be {n} integers separated by commas and no white spaces. Answer:
\end{promptbox}

\begin{figure}[ht]
    \centering
    \includegraphics[width=0.9\linewidth]{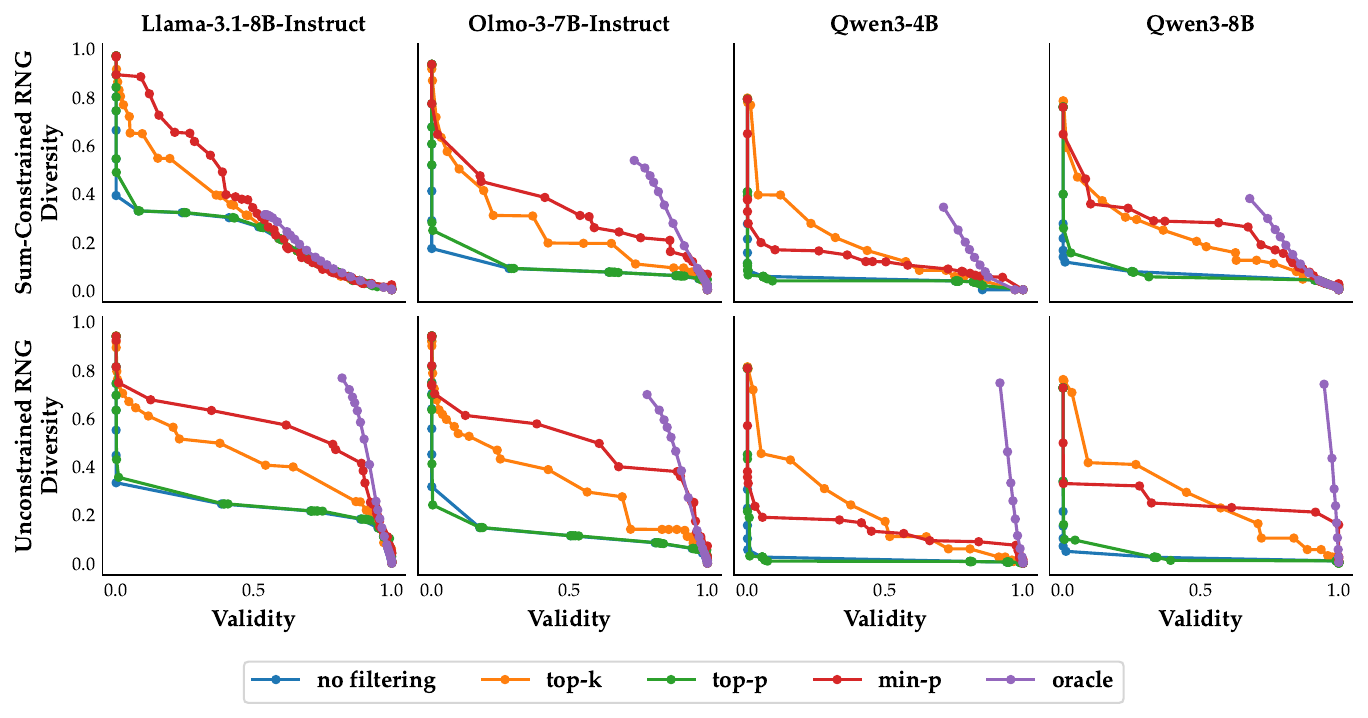}
    \caption{Validity--Diversity trade-off frontiers for constrained and unconstrained random number generation across 4 model families.}
    \label{fig:full_dv_random}
\end{figure}

Figure~\ref{fig:full_dv_random} shows the validity--diversity trade-off frontiers across all four model families. The cutoff oracle helps disentangle the effects of shape and order calibration. In the unconstrained setting, the oracle often approaches the ideal point $(1.0, 1.0)$, suggesting that the main gap stems from standard top-token filters' inability to infer the correct support size. In contrast, in the sum-constrained setting, the oracle itself remains separated from $(1.0, 1.0)$, indicating stronger order miscalibration. The additional gap between standard top-token filtering methods and the oracle reflects shape miscalibration, since fixed top-$k$, cumulative-mass, and relative-probability thresholds do not reliably recover the correct valid-token boundary. Together, these results show that shape and order miscalibration jointly contribute to the validity--diversity trade-off, with their interaction becoming more pronounced under compositional constraints.

\subsection{Random State Generation}
\label{app:random_state}
\begin{promptbox}{Name a Random State Prompt}
Randomly name a US state with no additional explanation. Answer:    
\end{promptbox}

Figure~\ref{fig:state_seq} shows sequence-level probabilities under different temperatures on the random state generation task. As in Figure~\ref{fig:temp_scaling}, the distribution is highly concentrated and heavy-tailed: a small number of valid sequences receive much larger probability than the rest. Increasing temperature flattens the valid-sequence distribution, but it also shifts substantial probability mass into the invalid region before the model approaches a uniform distribution over valid states. Thus, temperature scaling again improves valid diversity only by sacrificing validity.

\begin{figure}[ht]
    \centering
    \includegraphics[width=0.7\linewidth]{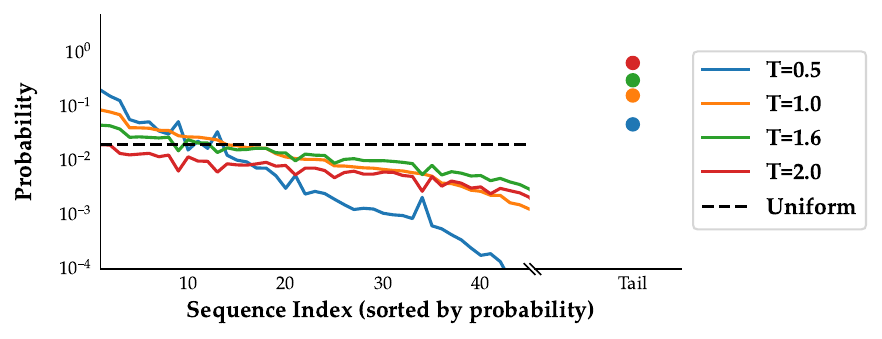}
    \caption{Sequence probability for random state generation task. Sequences are sorted by probability, and probabilities are plotted in log-space. The ``tail'' section represents the total probability mass of the invalid region.}
    \label{fig:state_seq}
\end{figure}

\section{Formal Definitions of Calibration}

A decoding rule achieves high validity if it assigns high probability only on valid tokens at each time step and it achieves high diversity if it explores many distinct tokens in $G$ rather than concentrating on only a few of them. Therefore, we define order and shape calibration as:

\begin{definition}[Calibration]
Given $y_{<t} \in \mathcal{V}^{t-1}$ and an LLM conditional distribution $p$,

\begin{enumerate}[left=1ex, topsep=1pt, itemsep=1pt, parsep=1pt]
    \item \textbf{(Order Calibration)} $p$ is order calibrated if for any valid token $v\in\mathcal{V}$ and invalid token $w\in\mathcal{V}$, it assigns a higher probability to the valid token, i.e., $p(v\mid y_{<t}) \geq p(w\mid y_{<t})$.
    
    \item \textbf{(Shape Calibration)} $p$ is shape calibrated if for any token $v\in \mathcal{V}$, it assigns probability mass to $v$ according to the number of valid continuations starting with $v$, i.e., $p(v\mid y_{<t}) \propto N(y_{<t} \circ v)$.
\end{enumerate}
\end{definition}

Note that shape calibration is stronger than order calibration: even if order calibration is resolved, shape calibration can still persist. However, perfect shape calibration is strictly harder than order calibration.

Our notation of ``calibration'' is not directly related to LLM confidence calibration literature. 

\section{Analysis of Theorem~\ref{thm:mult}}
\label{app:mult}

In this section, we provide a formal analysis of how local truncation decisions
affect sequence-level behavior. We restate the key definitions for completeness
and give full proofs of the results in Section~\ref{sec:weak_calibration}.

\subsection{Setup and notation}

Fix a valid set $V \subseteq \mathcal{V}^d$. At each decoding step $t$, a rule
retains a nonempty subset $S_t(y_{<t}) \subseteq \mathcal{V}$ and samples
\[
Y_t \sim \mathrm{Unif}(S_t(Y_{<t})).
\]
Let $Q_S$ denote the induced distribution over full sequences.

Let $G(y_{<t})$ denote the set of valid next tokens, and let
$N(y_{<t})$ denote the number of valid completions extending $y_{<t}$.

\paragraph{Local precision and recall.}
For a prefix $y_{<t}$, define
\[
\mathrm{Prec}_t(S;y_{<t})
:=
\frac{|S_t(y_{<t}) \cap G(y_{<t})|}{|S_t(y_{<t})|},
\]
and
\[
\mathrm{Rec}_t(S;y_{<t})
:=
\frac{
\sum_{v \in S_t(y_{<t}) \cap G(y_{<t})}
N(y_{<t}\circ v)
}{
N(y_{<t})
}.
\]

\paragraph{Sequence-level precision and recall.}
\[
\mathrm{Prec}_{\mathrm{seq}}(S)
=
Q_S(Y \in V),
\qquad
\mathrm{Rec}_{\mathrm{seq}}(S)
=
\frac{
|\{y \in V : y_t \in S_t(y_{<t}) \ \forall t\}|
}{|V|}.
\]

\subsection{Multiplicative decomposition}

\begin{theorem}[Exact multiplicative decomposition]
\label{thm:mult-decomp-app}
\[
\mathrm{Prec}_{\mathrm{seq}}(S)
=
\prod_{t=1}^d \alpha_t(S),
\qquad
\mathrm{Rec}_{\mathrm{seq}}(S)
=
\prod_{t=1}^d \beta_t(S),
\]
where
\[
\alpha_t(S)
:=
\mathbb{E}_{Q_S}\!\left[
\mathrm{Prec}_t(S;Y_{<t})
\mid Y_s \in G(Y_{<s}) \ \forall s<t
\right],
\]
and
\[
\beta_t(S)
:=
\mathbb{E}_{U_V}\!\left[
\mathrm{Rec}_t(S;Y_{<t})
\mid Y_s \in S_s(Y_{<s}) \ \forall s<t
\right].
\]
\end{theorem}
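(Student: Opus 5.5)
The plan is to obtain both identities from the chain rule of probability, applied to a different sequence distribution in each case: the decoding distribution $Q_S$ for precision and the uniform distribution $U_V$ on $V$ for recall. In each case the sequence-level event is an intersection of per-step events $E_1\cap\dots\cap E_d$. Each $E_t$ depends only on $Y_{\le t}$, and its conditional probability given $Y_{<t}$ is exactly the local quantity $\mathrm{Prec}_t$ or $\mathrm{Rec}_t$. The tower property then turns $P(E_t\mid E_{<t})$ into the conditional expectations $\alpha_t(S)$ and $\beta_t(S)$.

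\textbf{Step 1 (validity is stepwise).} Since $V\subseteq\mathcal{V}^d$, I will first show that $y\in V$ if and only if $y_t\in G(y_{<t})$ for every $t\le d$. I use the convention that the empty continuation counts, so $N(y)=\mathbf{1}[y\in V]$ for a full-length $y$.
\begin{itemize}
\item If $y\in V$, then every prefix $y_{\le t}$ has at least one valid completion, namely the rest of $y$, so $N(y_{\le t})\ge1$ and hence $y_t\in G(y_{<t})$.
\item Conversely, $y_d\in G(y_{<d})$ means $N(y)>0$, which gives $y\in V$.
\end{itemize}

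\textbf{Step 2 (precision).} Let $E_t=\{Y_t\in G(Y_{<t})\}$. By Step 1, $Q_S(Y\in V)=Q_S(\bigcap_t E_t)=\prod_t Q_S(E_t\mid E_{<t})$, where $E_{<t}=\bigcap_{s<t}E_s$. Under $Q_S$, $Y_t$ is uniform on $S_t(Y_{<t})$ given $Y_{<t}$. Hence $Q_S(E_t\mid Y_{<t})=|S_t\cap G|/|S_t|=\mathrm{Prec}_t(S;Y_{<t})$. The event $E_{<t}$ is $\sigma(Y_{<t})$-measurable, so conditioning on it and averaging gives $Q_S(E_t\mid E_{<t})=\alpha_t(S)$.

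\textbf{Step 3 (recall).} Let $F_t=\{Y_t\in S_t(Y_{<t})\}$ and $Y\sim U_V$. Then by definition $\mathrm{Rec}_{\mathrm{seq}}(S)=U_V(\bigcap_t F_t)=\prod_t U_V(F_t\mid F_{<t})$. The key computation is the conditional law of $Y_t$ under $U_V$ given a prefix $y_{<t}$ with $N(y_{<t})>0$. Counting completions gives $U_V(Y_t=v\mid y_{<t})=N(y_{<t}\circ v)/N(y_{<t})$ for $v\in G(y_{<t})$, and $0$ otherwise. Summing over $v\in S_t$ yields $U_V(F_t\mid Y_{<t})=\mathrm{Rec}_t(S;Y_{<t})$. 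This is well defined because $U_V$-almost every prefix is valid, and at $t=1$ we have $N(\emptyset)=|V|$. As in Step 2, the tower property over the $\sigma(Y_{<t})$-measurable event $F_{<t}$ gives $\beta_t(S)$.

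\textbf{Main obstacle.} The only delicate point is degenerate conditioning. If some $Q_S(E_{<t})$ or $U_V(F_{<t})$ equals $0$, then $\alpha_t(S)$ or $\beta_t(S)$ is undefined. I will adopt the convention that these factors may be set arbitrarily, say to $0$, once the running product vanishes. Both sides are then $0$, so the identity still holds. I will also check that $\mathrm{Prec}_t$ and $\mathrm{Rec}_t$ are only ever evaluated on valid prefixes inside the conditioning events, so that no division by $N=0$ occurs.
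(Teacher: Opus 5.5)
Your proposal is correct and follows the paper's route. Both arguments apply the chain rule under $Q_S$ for precision and under $U_V$ for recall, identify the one-step conditional probability given the prefix as $\mathrm{Prec}_t$ or $\mathrm{Rec}_t$, and average over the $\sigma(Y_{<t})$-measurable conditioning event. Your Step~1 (the proof that $\{Y\in V\}$ is the intersection of the per-step validity events) and your handling of zero-probability conditioning events make explicit points that the paper only asserts or leaves implicit.
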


\begin{proof}
Define
\[
F_t := \{Y_s \in G(Y_{<s}) \text{ for all } s \le t\}.
\]
A sequence is valid if and only if it preserves access to a valid continuation
at every step, hence $\{Y \in V\} = F_d$. By the chain rule,
\[
\mathrm{Prec}_{\mathrm{seq}}(S)
=
Q_S(F_d)
=
\prod_{t=1}^d Q_S(F_t \mid F_{t-1}).
\]
Conditioned on $F_{t-1}$ and $Y_{<t}$, the next token is sampled uniformly
from $S_t(y_{<t})$, yielding
\[
Q_S(F_t \mid F_{t-1}, Y_{<t})
=
\mathrm{Prec}_t(S;y_{<t}).
\]
Taking expectations gives $\alpha_t(S)$.

For recall, let $U_V$ denote the uniform distribution over $V$ and define
\[
E_t := \{Y_s \in S_s(Y_{<s}) \text{ for all } s \le t\}.
\]
Then
\[
\mathrm{Rec}_{\mathrm{seq}}(S)
=
U_V(E_d)
=
\prod_{t=1}^d U_V(E_t \mid E_{t-1}).
\]
Conditioned on $Y_{<t}$, the next token under $U_V$ is distributed
proportionally to continuation counts, yielding $\mathrm{Rec}_t(S;y_{<t})$.
Taking expectations gives $\beta_t(S)$.
\end{proof}

\paragraph{Local trade-off view.}
The multiplicative decomposition shows that sequence-level precision and recall
are governed by accumulated local log-losses,
\[
    u_t(S):=-\log \alpha_t(S),
    \qquad
    v_t(S):=-\log \beta_t(S).
\]
Thus, maintaining high sequence-level precision imposes a small total budget on
the precision losses $\sum_t u_t(S)$. Theorem~\ref{thm:formal-compounding}
formalizes the consequence: if many steps necessarily incur nontrivial recall
loss whenever their precision loss is small, then sequence-level recall must
decay exponentially.

\subsection{Compounding effect of hard decoding steps}

\paragraph{Formalization of Theorem~\ref{thm:mult}.}
The main text theorem states that if many decoding steps incur an unavoidable
loss in recall when enforcing high precision, then sequence-level recall
decays exponentially. We now formalize this statement.

\begin{definition}[$(\eta,\rho)$-hard step]
Fix $\eta,\rho>0$. A step $t$ is $(\eta,\rho)$-hard if, for every decoding rule $S$,
\[
    u_t(S)\le \eta
    \quad \Longrightarrow \quad
    v_t(S)\ge \rho .
\]
Equivalently, whenever the local precision loss at step $t$ is at most $\eta$, the local recall loss is at least $\rho$.
\end{definition}

\begin{theorem}[Compounding effect of hard decoding steps]
\label{thm:formal-compounding}
Suppose at least $m$ decoding steps are $(\eta,\rho)$-hard. Then any decoding rule satisfying
\[
    \mathrm{Prec}_{\mathrm{seq}}(S)\ge 1-\delta
\]
also satisfies
\[
    \mathrm{Rec}_{\mathrm{seq}}(S)
    \le
    \exp\!\left(
    -\rho
    \left(
    m-\frac{-\log(1-\delta)}{\eta}
    \right)_+
    \right).
\]
\end{theorem}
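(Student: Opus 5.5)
The plan is to argue entirely in log-space via the exact multiplicative decomposition of Theorem~\ref{thm:mult-decomp-app}. Taking logarithms gives
\[
-\log \mathrm{Prec}_{\mathrm{seq}}(S)=\sum_{t=1}^d u_t(S),\qquad -\log \mathrm{Rec}_{\mathrm{seq}}(S)=\sum_{t=1}^d v_t(S),
\]
with every $u_t(S),v_t(S)\ge 0$. This holds because $\alpha_t,\beta_t\in[0,1]$, being conditional expectations of quantities in $[0,1]$. The hypothesis $\mathrm{Prec}_{\mathrm{seq}}(S)\ge 1-\delta$ then becomes a budget constraint $\sum_t u_t(S)\le L:=-\log(1-\delta)$.

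Next, let $H$ be a set of $m$ hard steps, and split $H$ into the steps where $u_t(S)>\eta$ (call this $B$) and those where $u_t(S)\le\eta$. By nonnegativity of all $u_t$,
\[
\eta|B|<\sum_{t\in B}u_t(S)\le \sum_t u_t(S)\le L,
\]
so $|B|\le L/\eta$. Every step in $H\setminus B$ is $(\eta,\rho)$-hard with $u_t(S)\le\eta$, so $v_t(S)\ge\rho$. Since $v_t\ge0$ at all remaining steps,
\[
-\log\mathrm{Rec}_{\mathrm{seq}}(S)\ge \rho\,|H\setminus B|\ge \rho\,(m-L/\eta)_+ .
\]
Exponentiating gives the claimed bound.

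To recover the main-text form, set $c=\rho$ and $C=\rho/\eta$. Then $e^{-\rho m+\rho L/\eta}=(1-\delta)^{-C}e^{-cm}$, and the positive part only improves the bound because $\mathrm{Rec}_{\mathrm{seq}}\le 1$.

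The main obstacle is interpretive rather than computational. The $(\eta,\rho)$-hardness condition quantifies over all decoding rules $S$, but $u_t(S)$ and $v_t(S)$ are averages over the prefix distributions induced by $S$ itself ($Q_S$ conditioned on validity, and $U_V$ conditioned on survival). I will take the definition exactly as stated, as a property of the aggregated per-step quantities, so no further argument is needed. I will also check the degenerate cases $\alpha_t=0$ or $\beta_t=0$, where $u_t=\infty$ or $v_t=\infty$ and the inequalities hold trivially, and note that conditioning on a zero-probability event only arises when the overall product is already zero.
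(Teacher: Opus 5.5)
Your proposal is correct and matches the paper's proof step for step: taking logs of the multiplicative decomposition gives the budget $\sum_t u_t(S)\le -\log(1-\delta)$, a counting argument shows at most $-\log(1-\delta)/\eta$ hard steps can have $u_t(S)>\eta$, and hardness gives $v_t(S)\ge\rho$ on the remaining hard steps. You make explicit the nonnegativity $u_t,v_t\ge 0$ (the paper's counting and summing steps rely on it silently) and you handle the degenerate cases, both welcome additions; the only nit is that $\eta|B|<\sum_{t\in B}u_t(S)$ should be non-strict so that it also covers $B=\emptyset$.
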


\begin{proof}
By the multiplicative decomposition,
\[
    \sum_{t=1}^d u_t(S)
    =
    -\log \mathrm{Prec}_{\mathrm{seq}}(S)
    \le
    -\log(1-\delta).
\]
Among the $m$ hard steps, fewer than $-\log(1-\delta)/\eta$ steps can have $u_t(S)>\eta$; otherwise the total precision loss would exceed the budget. Therefore at least
\[
    \left(
    m-\frac{-\log(1-\delta)}{\eta}
    \right)_+
\]
hard steps satisfy $u_t(S)\le\eta$. For each such step, hardness implies
$v_t(S)\ge\rho$. Hence
\[
    -\log \mathrm{Rec}_{\mathrm{seq}}(S)
    =
    \sum_{t=1}^d v_t(S)
    \ge
    \rho
    \left(
    m-\frac{-\log(1-\delta)}{\eta}
    \right)_+ .
\]
Exponentiating gives the result.
\end{proof}

\paragraph{Interpretation.}
Maintaining high sequence-level precision imposes a constant total
precision-loss budget across all decoding steps. As the sequence
length grows, most steps must operate in a near-perfect regime.
If many such steps still incur a fixed recall loss, these losses
accumulate multiplicatively, causing an exponential decay in the
set of reachable valid sequences.

\section{Analysis of Theorem~\ref{thm:length}}
\label{app:length_proof}

The main point is simple: if a model must place very high probability on valid
tokens at each step, then its next-token distribution must become sharper. But sharper
distributions are less uniform over the valid choices, which reduces validity-conditioned
diversity. Since entropy losses add over sequence positions, the diversity loss compounds
with length.

\begin{definition}[Discrete geometric ranked model]
\label{def:discrete_geometric_ranked_model}
Fix a temperature \(T>0\). At each valid prefix \(y_{<t}\), assume that after sorting tokens by
decreasing probability, the next-token distribution is geometric in rank:
\[
    P_t^{(T)}(i\mid y_{<t})
    =
    (1-q_t)q_t^i,
    \qquad i=0,1,2,\dots,|\mathcal{V}| - 1
\]
where
\[
    q_t=\exp(-\lambda_t/T),
    \qquad \lambda_t>0.
\]
Equivalently,
\[
    P_t^{(T)}(i\mid y_{<t})
    \propto
    \exp(-\lambda_t i/T).
\]
\end{definition}

For each position \(t\), define the normalized sharpness
\[
    z_t:=\frac{\lambda_t v_t}{T}.
\]
This quantity measures how much the ranked distribution decays across the valid interval
\(\{0,\dots,v_t-1\}\).

For \(v\in\mathbb N_+\) and \(a>0\), define \(H_v(a)\) as the entropy of the tilted
distribution
\[
    p_{v,a}(i)
    =
    \frac{\exp(-ai/v)}
    {\sum_{j=0}^{v-1}\exp(-aj/v)},
    \qquad i=0,\dots,v-1.
\]
When \(a=0\), this distribution is uniform over \(v\) tokens and has entropy \(\ln v\). When
\(a>0\), it is tilted toward smaller ranks, so its entropy is smaller.

Define the per-step entropy loss
\[
    \Delta_v(a)
    :=
    \ln v - H_v(a).
\]
This measures how much diversity is lost, at one step, relative to being uniform over the
\(v\) valid choices.

\begin{definition}[Diversity]
\label{def:validity_conditioned_diversity}
Let \(Y\sim P^{(T)}(\cdot\mid x)\), and let \(V\) be the set of valid sequences. Define
\[
    \mathrm{Div}(P^{(T)})
    :=
    \frac{\exp(H(Y\mid Y\in V))}{|V|}.
\]
This quantity equals \(1\) when the model is uniform over valid sequences after conditioning
on validity. It decreases when the conditional distribution over valid sequences becomes more
concentrated.
\end{definition}

\begin{lemma}[Entropy loss increases with sharpness]
\label{lem:entropy_loss_increases_with_sharpness_clean}
For every fixed \(v\), the entropy loss
\[
    \Delta_v(a)
    =
    \ln v-H_v(a)
\]
is nondecreasing in \(a\). Moreover, if \(v\ge2\) and \(a>0\), then
\[
    \Delta_v(a)>0.
\]
\end{lemma}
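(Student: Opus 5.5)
Fix $v\ge 2$; the case $v=1$ is trivial, since $H_1\equiv 0$ and $\Delta_1\equiv 0$. The plan is to view $p_{v,a}$ as an exponential family in the sufficient statistic $i$, with natural parameter $\theta=-a/v$. Write $Z(\theta)=\sum_{j=0}^{v-1}e^{\theta j}$ and $A(\theta)=\ln Z(\theta)$. Then
\[
H_v(a)=A(\theta)-\theta\,\mu(\theta), \qquad \mu(\theta)=A'(\theta)=\mathbb{E}_{p}[i].
\]
Differentiating in $\theta$ gives
\[
\frac{d}{d\theta}H=\mu-\mu-\theta A''(\theta)=-\theta\,\mathrm{Var}_p(i).
\]
Since $\theta=-a/v$, the chain rule yields
\[
\frac{d}{da}H_v(a)=-\frac{1}{v}\cdot\frac{a}{v}\,\mathrm{Var}_{p_{v,a}}(i)=-\frac{a}{v^2}\,\mathrm{Var}_{p_{v,a}}(i)\le 0 \quad\text{for } a\ge 0.
\]
Hence $H_v$ is nonincreasing and $\Delta_v=\ln v-H_v$ is nondecreasing on $[0,\infty)$. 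Monotonicity is only meaningful for $a\ge0$, matching the regime $z_t>0$ in which the lemma is used.

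For strict positivity, $v\ge2$ means $p_{v,a}$ has full support on at least two points, so $\mathrm{Var}_{p_{v,a}}(i)>0$. The derivative is therefore strictly negative for every $a>0$, and integrating from $0$ to $a$ gives $H_v(a)<H_v(0)=\ln v$, i.e.\ $\Delta_v(a)>0$.

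There is also a quicker argument for positivity alone. The uniform distribution is the unique maximizer of entropy on $v$ points, and for $a>0$, $v\ge 2$, $p_{v,a}$ is non-uniform because $p(0)>p(1)$. So $H_v(a)<\ln v$ by the strict equality case of Gibbs' inequality.

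The only delicate step is the derivative identity $H'(\theta)=-\theta A''(\theta)$. It rests on the standard facts $A'=\mu$ and $A''=\mathrm{Var}(i)$, both of which follow by differentiating the finite sum $Z$ directly, so no real obstacle is expected. The remaining care is in handling signs correctly when converting from $\theta$ to $a$.
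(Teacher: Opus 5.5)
Your proof is correct and follows essentially the same route as the paper: both treat $p_{v,a}$ as an exponential family and differentiate the entropy to get $-\theta\,\mathrm{Var}(i)$ (you use the natural parameter $\theta=-a/v$, the paper uses $\theta=a/v$). Both then deduce $\Delta_v(a)>0$ from the non-uniformity of $p_{v,a}$ when $v\ge 2$ and $a>0$; your extra argument via the strictly negative derivative and your explicit restriction to $a\ge 0$ are harmless refinements.
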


\begin{proof}
Write $\theta=a/v$ and
\[
    Z(\theta)=\sum_{j=0}^{v-1}e^{-\theta j}.
\]
For $p_\theta(i)\propto e^{-\theta i}$,
\[
    H_v(a)=\log Z(\theta)+\theta \mathbb E_\theta[i].
\]
Hence
\[
    \frac{d}{d\theta}H_v(a)
    =
    \theta\frac{d}{d\theta}\mathbb E_\theta[i]
    =
    -\theta\operatorname{Var}_\theta(i)
    \le 0.
\]
Since $\theta=a/v$, $H_v(a)$ is nonincreasing in $a$, so
$\Delta_v(a)=\log v-H_v(a)$ is nondecreasing. If $v\ge2$ and $a>0$, then
$p_{v,a}$ is nonuniform, so $H_v(a)<\log v$ and $\Delta_v(a)>0$.
\end{proof}

\begin{theorem}[discrete validity--diversity trade-off]
\label{thm:clean_discrete_validity_diversity_trade-off}
Assume the discrete geometric ranked model and invariant valid
branching. Let
\[
    L:=\ln\epsilon^{-1}.
\]
If
\[
    \mathrm{Val}(P^{(T)})\ge 1-\epsilon,
\]
then
\[
    \mathrm{Div}(P^{(T)})
    \le
    \exp\left(
        -\sum_{t=1}^d \Delta_{v_t}(L)
    \right).
\]
Equivalently,
\[
    \mathrm{Div}(P^{(T)})
    \le
    \exp\left(
        -\sum_{t=1}^d
        \left(\ln v_t-H_{v_t}(\ln\epsilon^{-1})\right)
    \right).
\]

In particular, stricter validity requirements force a smaller upper bound on validity-conditioned diversity: as \(\epsilon\) decreases, \(L=\ln\epsilon^{-1}\) increases,
and each entropy-loss term \(\Delta_{v_t}(L)\) increases.
\end{theorem}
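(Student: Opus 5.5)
The plan is to compute both validity and the validity-conditioned entropy exactly under the discrete geometric ranked model, exploiting the factorization that invariant valid branching (Assumption~\ref{assump:invariant_valid_branching}) provides. First, validity. Since the valid next tokens occupy the top $v_t$ ranks, the probability of staying valid at step $t$, given any valid prefix, is
\[
\sum_{i=0}^{v_t-1}(1-q_t)q_t^i = 1-q_t^{v_t}.
\]
This quantity depends only on $t$. By the chain rule (as in the proof of Theorem~\ref{thm:mult-decomp-app}),
\[
\mathrm{Val}(P^{(T)})=\prod_{t=1}^d\bigl(1-q_t^{v_t}\bigr).
\]
Each factor is at most $1$, so $\mathrm{Val}\ge 1-\epsilon$ forces $1-q_t^{v_t}\ge 1-\epsilon$ for every $t$. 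That is, $q_t^{v_t}\le\epsilon$, which reads $z_t=\lambda_t v_t/T\ge \ln\epsilon^{-1}=L$. This turns the global validity constraint into a per-step lower bound on the normalized sharpness.

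Second, diversity. Conditioning $Y$ on $Y\in V$ amounts to multiplying the per-step truncated conditionals. The normalizers $1-q_t^{v_t}$ do not depend on the prefix, so the restricted sequence distribution is exactly
\[
P(y\mid Y\in V)=\prod_{t=1}^d \frac{(1-q_t)q_t^{r_t}}{1-q_t^{v_t}} = \prod_{t=1}^d p_{v_t,z_t}(r_t),
\]
where $r_t\in\{0,\dots,v_t-1\}$ is the rank of $y_t$ among the valid tokens. Here we use $q_t^{i}=e^{-z_t i/v_t}$. By the chain rule for entropy, $H(Y\mid Y\in V)=\sum_t H_{v_t}(z_t)$. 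Invariant branching also gives $|V|=\prod_t v_t$. Hence
\[
\mathrm{Div}(P^{(T)})=\exp\Bigl(\sum_t H_{v_t}(z_t)-\sum_t\ln v_t\Bigr)=\exp\Bigl(-\sum_t\Delta_{v_t}(z_t)\Bigr).
\]
Finally, Lemma~\ref{lem:entropy_loss_increases_with_sharpness_clean} states that $\Delta_v$ is nondecreasing. Combined with $z_t\ge L$, this gives $\Delta_{v_t}(z_t)\ge\Delta_{v_t}(L)$, and the bound follows. The closing monotonicity remark in $\epsilon$ is again the Lemma, applied to $L=\ln\epsilon^{-1}$.

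The main obstacle is justifying the exact factorization, namely that the restricted distribution is a product of tilted distributions with prefix-independent parameters. Two issues arise.

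\begin{itemize}
\item If $\lambda_t$ is allowed to depend on the prefix, I would argue pointwise instead. The validity constraint still gives $q_t^{v_t}\le\epsilon$ at every valid prefix, because each conditional factor must be at least $1-\epsilon$ along every valid path. This needs care, since the product bound only controls an average. Then I would write $H(Y\mid Y\in V)=\sum_t \mathbb{E}[H(Y_t\mid Y_{<t},Y\in V)]$ and bound each conditional entropy by $H_{v_t}(L)$. If the average only controls things in expectation, I would fall back to assuming $\lambda_t$ depends only on $t$, as Definition~\ref{def:discrete_geometric_ranked_model} indexes it.
\item The finite-vocabulary normalizer $1/(1-q_t^{|\mathcal V|})$ omitted in the definition only rescales the invalid tail mass. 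It does not change the within-valid ratios $p_{v_t,z_t}$. It slightly strengthens the validity factor, so I would note that it can be handled by replacing $\epsilon$ with a marginally adjusted value, or absorbed by adopting the definition's unnormalized-tail convention.
\end{itemize}
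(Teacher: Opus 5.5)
Your proposal is correct and follows the paper's proof essentially step for step: per-step validity $1-q_t^{v_t}=1-e^{-z_t}$, the product-versus-single-factor bound giving $z_t\ge L$, factorization of the validity-conditioned law into the tilted distributions $p_{v_t,z_t}$ so that $\mathrm{Div}=\exp\bigl(-\sum_t\Delta_{v_t}(z_t)\bigr)$, and the monotonicity lemma to pass from $z_t$ to $L$. Your two caveats are conventions the paper adopts silently ($\lambda_t$ depends only on $t$, and the geometric is left unnormalized so the missing tail mass counts as invalid), and falling back to them is the right call, since under true finite-vocabulary normalization the adjusted-$\epsilon$ route only works when $v_t$ is a small enough fraction of $|\mathcal V|$ (if $v_t=|\mathcal V|$, validity imposes no lower bound on $z_t$ at all).
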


\begin{proof}
At position \(t\), the valid next tokens are the first \(v_t\) ranked tokens. Therefore, the local probability of choosing a valid next token is
\[
\begin{aligned}
    P_t^{(T)}(Y_t\in G_t(y_{<t})\mid y_{<t})
    &=
    \sum_{i=0}^{v_t-1}(1-q_t)q_t^i \\
    &=
    1-q_t^{v_t}.
\end{aligned}
\]
Since
\[
    q_t=\exp(-\lambda_t/T),
\]
we have
\[
    q_t^{v_t}
    =
    \exp(-\lambda_t v_t/T)
    =
    e^{-z_t}.
\]
Thus the local validity probability is
\[
    1-e^{-z_t}.
\]

By invariant valid branching, the full-sequence validity factorizes:
\[
    \mathrm{Val}(P^{(T)})
    =
    \prod_{t=1}^d (1-e^{-z_t}).
\]
Suppose
\[
    \mathrm{Val}(P^{(T)})\ge 1-\epsilon.
\]
Since the product is no larger than any individual factor, for every \(t\),
\[
    1-\epsilon
    \le
    \prod_{s=1}^d(1-e^{-z_s})
    \le
    1-e^{-z_t}.
\]
Therefore
\[
    e^{-z_t}\le \epsilon,
\]
which implies
\[
    z_t\ge \ln\epsilon^{-1}=L.
\]

Now condition on the event that the generated sequence is valid. At position \(t\), conditioned on choosing one of the \(v_t\) valid tokens, the rank distribution is
\[
    p_{v_t,z_t}(i)
    =
    \frac{\exp(-z_t i/v_t)}
    {\sum_{j=0}^{v_t-1}\exp(-z_t j/v_t)},
    \qquad i=0,\dots,v_t-1.
\]
Its entropy is \(H_{v_t}(z_t)\).

By the chain rule for entropy,
\[
    H(Y\mid Y\in V)
    =
    \sum_{t=1}^d H_{v_t}(z_t).
\]
Also, by invariant valid branching,
\[
    |V|=\prod_{t=1}^d v_t.
\]
Therefore
\[
\begin{aligned}
    \mathrm{Div}(P^{(T)})
    &=
    \frac{\exp(H(Y\mid Y\in V))}{|V|} \\
    &=
    \exp\left(
        \sum_{t=1}^d H_{v_t}(z_t)
        -
        \sum_{t=1}^d \ln v_t
    \right) \\
    &=
    \exp\left(
        -
        \sum_{t=1}^d
        \bigl(\ln v_t-H_{v_t}(z_t)\bigr)
    \right).
\end{aligned}
\]

By Lemma~\ref{lem:entropy_loss_increases_with_sharpness_clean}, entropy loss is nondecreasing in sharpness. Since \(z_t\ge L\), we have
\[
    \ln v_t-H_{v_t}(z_t)
    \ge
    \ln v_t-H_{v_t}(L)
    =
    \Delta_{v_t}(L).
\]
Plugging this into the previous display gives
\[
    \mathrm{Div}(P^{(T)})
    \le
    \exp\left(
        -\sum_{t=1}^d \Delta_{v_t}(L)
    \right).
\]
This proves the desired bound.

Finally, because each \(\Delta_{v_t}(L)\) is nondecreasing in \(L\), and because
\(L=\ln\epsilon^{-1}\) increases as \(\epsilon\) decreases, stricter validity requirements force a smaller upper bound on validity-conditioned diversity.
\end{proof}

\begin{corollary}[Exponential diversity loss in branching length]
\label{cor:exponential_diversity_loss_branching_length}
Let
\[
    m := \left|\{t\in[d]: v_t\ge2\}\right|
\]
be the branching length, and define
\[
    c_{\mathcal V}(\epsilon)
    :=
    \min_{2\le v\le |\mathcal V|}
    \Delta_v(\ln\epsilon^{-1}).
\]
For every $\epsilon\in(0,1)$, we have $c_{\mathcal V}(\epsilon)>0$ and
\[
    \mathrm{Div}(P^{(T)})
    \le
    \exp\!\left(-m\,c_{\mathcal V}(\epsilon)\right).
\]
In particular, if every position has at least two valid next-token choices, then $m=d$, and validity-conditioned diversity decays exponentially in sequence length.
\end{corollary}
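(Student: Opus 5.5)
The plan is to reduce the corollary to Theorem~\ref{thm:clean_discrete_validity_diversity_trade-off} and then lower-bound each summand uniformly. That theorem gives
\[
    \mathrm{Div}(P^{(T)}) \le \exp\Bigl(-\sum_{t=1}^d \Delta_{v_t}(L)\Bigr), \qquad L=\ln\epsilon^{-1},
\]
so everything reduces to showing $\sum_{t=1}^d \Delta_{v_t}(L) \ge m\, c_{\mathcal V}(\epsilon)$. I would split the positions $t\in[d]$ into two groups. The first group is the non-branching positions with $v_t=1$. The second group is the $m$ branching positions with $v_t\ge 2$.

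\textbf{Non-branching positions.} For $v_t=1$, the tilted distribution $p_{1,L}$ is a point mass, so $H_1(L)=0=\ln 1$ and $\Delta_1(L)=0$. These terms contribute nothing, and in any case each $\Delta_v(L)\ge 0$ because entropy on $v$ points is at most $\ln v$. We may therefore drop them from the sum.

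\textbf{Branching positions.} Each such $t$ satisfies $2\le v_t\le|\mathcal V|$, since valid tokens are a subset of the vocabulary. Hence $\Delta_{v_t}(L)\ge \min_{2\le v\le|\mathcal V|}\Delta_v(L)=c_{\mathcal V}(\epsilon)$. Summing over the $m$ branching positions gives the claimed exponent. When every position branches, $m=d$, which yields the ``in particular'' statement.

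\textbf{Positivity of $c_{\mathcal V}(\epsilon)$.} This is the step needing care, though it is mild. For $\epsilon\in(0,1)$ we have $L=\ln\epsilon^{-1}>0$. Lemma~\ref{lem:entropy_loss_increases_with_sharpness_clean} then gives $\Delta_v(L)>0$ for each $v\ge 2$. The minimum is taken over the finite set $\{2,\dots,|\mathcal V|\}$, because the vocabulary is finite, and a minimum of finitely many positive numbers is positive. The only obstacle is making sure the minimum is over a finite range. Over all $v\ge 2$ one would instead need a limiting argument, since $\Delta_v(a)$ converges as $v\to\infty$ to the entropy loss of a continuous tilted (truncated exponential) distribution on $[0,1]$, which is positive for $a>0$. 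The finite vocabulary makes that argument unnecessary.
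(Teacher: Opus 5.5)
Your proposal is correct and follows the paper's proof essentially step for step. You invoke Theorem~\ref{thm:clean_discrete_validity_diversity_trade-off}, drop the vanishing $v_t=1$ terms, bound each of the $m$ branching terms below by $c_{\mathcal V}(\epsilon)$, and get positivity from Lemma~\ref{lem:entropy_loss_increases_with_sharpness_clean} plus the finiteness of $\{2,\dots,|\mathcal V|\}$; your checks that $L>0$ for $\epsilon\in(0,1)$ and that $v_t\le|\mathcal V|$ just make explicit what the paper leaves implicit.
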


\begin{proof}
Theorem~\ref{thm:clean_discrete_validity_diversity_trade-off} gives
\[
    \mathrm{Div}(P^{(T)})
    \le
    \exp\!\left(
        -\sum_{t=1}^d \Delta_{v_t}(\ln\epsilon^{-1})
    \right).
\]
Terms with $v_t=1$ contribute no branching diversity. For every term with
$v_t\ge2$,
\[
    \Delta_{v_t}(\ln\epsilon^{-1})
    \ge
    c_{\mathcal V}(\epsilon).
\]
There are $m$ such terms, so
\[
    \sum_{t=1}^d \Delta_{v_t}(\ln\epsilon^{-1})
    \ge
    m\,c_{\mathcal V}(\epsilon).
\]
Finally, $c_{\mathcal V}(\epsilon)>0$ because it is the minimum of finitely many strictly positive entropy losses. This proves the claim.
\end{proof}

\begin{corollary}[Two regimes of the diversity loss]
\label{cor:two_regimes_diversity_loss}
Let \(L:=\ln\epsilon^{-1}\), and define
\[
    c_{\mathcal V}(\epsilon)
    :=
    \min_{2\le v\le |\mathcal V|}
    \Delta_v(L).
\]
Then
\[
    \mathrm{Div}(P^{(T)})
    \le
    \exp\!\left(-m\,c_{\mathcal V}(\epsilon)\right).
\]
Moreover, \(c_{\mathcal V}(\epsilon)\) has the following two regimes:
\[
    c_{\mathcal V}(\epsilon)
    =
    \frac{1}{32}L^2+O(L^3),
    \qquad
    L\to0,
\]
and
\[
    c_{\mathcal V}(\epsilon)
    \to
    \ln 2,
    \qquad
    L\to\infty.
\]
Equivalently, in the weak-validity regime,
\[
    \mathrm{Div}(P^{(T)})
    \le
    \exp\!\left(
        -m\left(\frac{1}{32}L^2+O(L^3)\right)
    \right),
\]
while in the stringent-validity regime,
\[
    \mathrm{Div}(P^{(T)})
    \le
    \exp\!\left(-m(\ln2-o(1))\right)
    =
    2^{-m+o(m)} .
\]
\end{corollary}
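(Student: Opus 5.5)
The first display, $\mathrm{Div}(P^{(T)})\le \exp(-m\,c_{\mathcal V}(\epsilon))$, is exactly Corollary~\ref{cor:exponential_diversity_loss_branching_length}. So the work is only the two asymptotic statements about $c_{\mathcal V}(\epsilon)=\min_{2\le v\le|\mathcal V|}\Delta_v(L)$. The two displayed bounds on $\mathrm{Div}$ then follow by substitution.

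\textbf{Integral representation.} I would start from the computation in the proof of Lemma~\ref{lem:entropy_loss_increases_with_sharpness_clean}, rescaled. Let $X=i/v\in[0,1)$, and let $p_{v,s}$ be the tilted law with parameter $s$, so $p_{v,s}(i)\propto e^{-sX}$. The same argument gives $\frac{d}{ds}H_v(s)=-s\operatorname{Var}_{s}(X)$. Since $H_v(0)=\ln v$, this yields
\[
\Delta_v(a)=\int_0^a s\,\operatorname{Var}_{s}(X)\,ds .
\]

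\textbf{Small-$L$ regime.} I would expand $\operatorname{Var}_s(X)$ around $s=0$. Its derivative in $s$ is minus the third central moment of $X$ under $p_{v,s}$. Because $X\in[0,1)$, that moment is bounded by $1$ in absolute value, uniformly in $v$ and $s$. Hence $\operatorname{Var}_s(X)=\operatorname{Var}_0(X)+O(s)$ uniformly in $v$, and
\[
\Delta_v(a)=\tfrac{a^2}{2}\operatorname{Var}_0(X)+O(a^3).
\]
Under the uniform law on $\{0,\dots,v-1\}$, $\operatorname{Var}_0(X)=(v^2-1)/(12v^2)$. This is increasing in $v$, so its minimum over $v\ge2$ is $3/48=1/16$, attained at $v=2$. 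Because the $O(a^3)$ term is uniform in $v$, the minimum of $\Delta_v(L)$ over $v$ is $\tfrac{L^2}{32}+O(L^3)$. For the upper direction, simply take $v=2$.

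\textbf{Large-$L$ regime.} For each fixed $v\ge2$, as $L\to\infty$ the tilt $\theta=L/v\to\infty$. The law $p_{v,L}$ then concentrates on rank $0$, so $H_v(L)\to0$ and $\Delta_v(L)\to\ln v$. The index set $\{2,\dots,|\mathcal V|\}$ is finite, so the limit of the minimum is the minimum of the limits, namely $\min_v \ln v=\ln 2$. As a sanity check, $\Delta_2(L)\le\ln 2$ always, so $c_{\mathcal V}(\epsilon)\le\ln 2$ and the convergence is from below. This gives the $2^{-m+o(m)}$ form.

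\textbf{Main obstacle.} The only delicate point is the uniformity in $v$ of the $O(L^3)$ remainder in the small-$L$ regime. Rescaling ranks to $X=i/v\in[0,1)$ is what makes the third-moment bound independent of $v$, and I would check that step carefully.
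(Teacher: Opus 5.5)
Your proof is correct and follows the same route as the paper. The first bound is the preceding corollary. The small-$L$ regime comes from $\Delta_v(L)=\frac{v^2-1}{24v^2}L^2+O(L^3)$, minimized at $v=2$. The large-$L$ regime comes from passing the limit $\Delta_v(L)\to\ln v$ through a finite minimum.

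The one difference is that you derive the expansion yourself from $\Delta_v(a)=\int_0^a s\operatorname{Var}_s(i/v)\,ds$, with remainder at most $a^3/3$ uniformly in $v$. This cleanly justifies taking the minimum over $v$, whereas the paper asserts $c_{\mathcal V}(\epsilon)=\Delta_2(L)$ from the leading coefficient alone. Since $\{2,\dots,|\mathcal V|\}$ is finite, that uniformity is automatic anyway, so your rescaling buys a vocabulary-independent constant rather than resolving a genuine obstacle.
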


\begin{proof}
The first inequality follows directly from Corollary~\ref{cor:exponential_diversity_loss_branching_length} with
\(c(\epsilon)\) replaced by the finite-vocabulary minimum
\(c_{\mathcal V}(\epsilon)\).

We now prove the two asymptotic regimes. Recall that
\[
    \Delta_v(L)
    =
    D_{\mathrm{KL}}\!\left(p_{v,L}\,\middle\|\,U_v\right),
\]
where \(U_v\) is uniform on \(\{0,\dots,v-1\}\) and
\(p_{v,L}(i)\propto e^{-Li/v}\).

For \(L\to0\), \(p_{v,L}\) is a small exponential tilt of \(U_v\). The standard second-order expansion of KL divergence gives
\[
    \Delta_v(L)
    =
    \frac{L^2}{2}
    \operatorname{Var}_{U_v}\!\left(\frac{i}{v}\right)
    +
    O(L^3).
\]
Since
\[
    \operatorname{Var}_{U_v}\!\left(\frac{i}{v}\right)
    =
    \frac{v^2-1}{12v^2},
\]
we obtain
\[
    \Delta_v(L)
    =
    \frac{v^2-1}{24v^2}L^2
    +
    O(L^3).
\]
The coefficient
\[
    \frac{v^2-1}{24v^2}
\]
is increasing in \(v\ge2\), so the minimum over
\(2\le v\le |\mathcal V|\) is attained at \(v=2\). Hence
\[
    c_{\mathcal V}(\epsilon)
    =
    \Delta_2(L)
    =
    \frac{1}{32}L^2+O(L^3).
\]

For \(L\to\infty\), fix any finite \(v\). Then \(p_{v,L}\) concentrates on the top-ranked token, so
\[
    H_v(L)\to0,
    \qquad
    \Delta_v(L)=\ln v-H_v(L)\to \ln v.
\]
Because the minimum is over the finite set \(\{2,\dots,|\mathcal V|\}\), we may pass the limit through the minimum:
\[
    c_{\mathcal V}(\epsilon)
    =
    \min_{2\le v\le |\mathcal V|}\Delta_v(L)
    \to
    \min_{2\le v\le |\mathcal V|}\ln v
    =
    \ln2.
\]
Substituting these two asymptotics into
\[
    \mathrm{Div}(P^{(T)})
    \le
    \exp(-m\,c_{\mathcal V}(\epsilon))
\]
gives the claimed diversity bounds.
\end{proof}

\paragraph{Interpretation.}
The result shows that high validity requires every local invalid-token probability to be small. In the geometric ranked model, this forces large normalized sharpness $z_t$. After conditioning on validity, the distribution over the $v_t$ valid choices is therefore
tilted rather than uniform, causing an entropy loss $\Delta_{v_t}(z_t)$ at each branching position. Since entropy losses add over positions, the exponentiated diversity decays
multiplicatively, yielding
\[
    \mathrm{Div}(P^{(T)})
    \le
    \exp(-m\,c_{\mathcal V}(\epsilon)).
\]

\section{Experiments with Production-Level Models}\label{app:gpt}

In Figure~\ref{fig:gpt_city}, we prompted \texttt{GPT-5.5} with the prompt ``Name a random city in the world'', with default thinking level and temperature. The models response almost always lie in a limited set of few cities.

\begin{figure}[t]
    \centering
    \includegraphics[width=\linewidth]{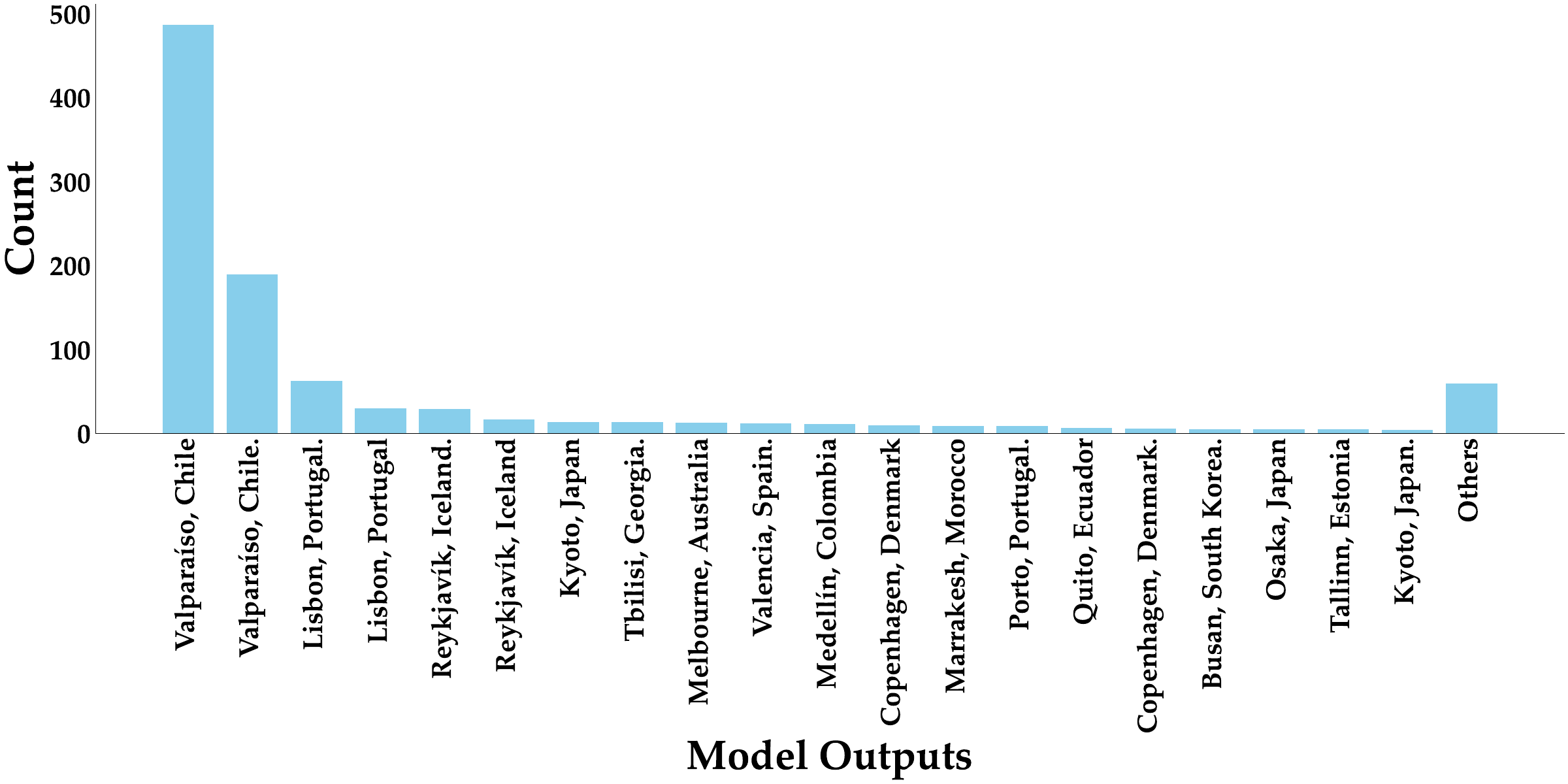}
    \caption{Prompting \texttt{GPT-5.5} to randomly name a city in the world. The vast majority of answers, with or without user chat history, collapses to ``Valparaíso, Chile.'' This shows a strong collapse in diversity.}
    \label{fig:gpt_city}
\end{figure}

\section{Case Study: Calibration in Coding}
\label{app:coding}
We examine the distributional properties of LLMs in a simple coding setting. Although diversity is not inherently the primary objective in coding tasks, prior work has shown that greedy decoding is often suboptimal. Instead, \citet{yue2025limit-of-rlvr} advocate evaluating models using the \textit{pass@k} metric, which measures whether at least one of $k$ sampled solutions correctly solves the task. This highlights the importance of effective sampling even in domains where a single correct answer suffices.

Figure~\ref{fig} illustrates sequence-level probability distributions on a coding task from LiveCodeBench~\citep{jain2025livecodebench}. We observe clear evidence of both order and shape miscalibration. Valid solutions are interspersed with invalid ones throughout the ranked distribution~\citep{karan2025reasoningsamplingbasemodel}, indicating poor order calibration. At the same time, probability mass is unevenly concentrated across valid solutions, with a small subset dominating the distribution, reflecting shape miscalibration. 

These results show that our framework helps diagnose model failures beyond diversity alone, providing a lens to understand why sampling-based improvements such as higher \textit{pass@k} remain difficult to achieve in practice.

\begin{figure}[t]
\centering
\includegraphics[width=\linewidth]{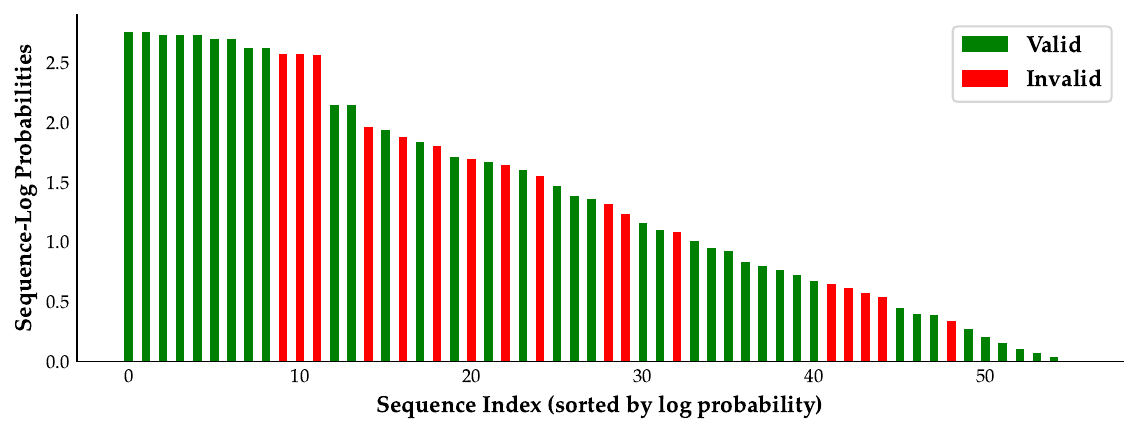}
\caption{Sequence-level probability distribution for a coding task from LiveCodeBench~\citep{jain2025livecodebench}, with sequences sorted by log-probability. Valid and invalid solutions are intermixed across the ranking (order miscalibration), and probability mass is concentrated on a small subset of valid solutions (shape miscalibration).}
\label{fig}
\end{figure}

\section{Empirical Analysis of Logits}
\label{app:logits_fit}
To investigate the logits distribution, we fit the logits at each generation conditional to a piecewise model, defined as 
\[
f(k) =
\begin{cases}
mk + b, & k \le c, \\
A + B \log(k + C), & k > c,
\end{cases}
\]
where $k$ refers to the logit index and $f(k)$ the corresponding logit value. To find $c$, we sweep over all token indexes. Figures~\ref{fig:logits_llama}, \ref{fig:logits_qwen}, \ref{fig:logits_olmo} show the curve fit and MSE, R$^2$ of fitting to various tasks and conditionals. Results show that LLM conditionals are consistently sharp-headed and heavy-tailed.

\begin{figure}[!h]
    \centering
      \begin{subfigure}[!t]{0.99\textwidth}
        \includegraphics[width=1.0\linewidth]{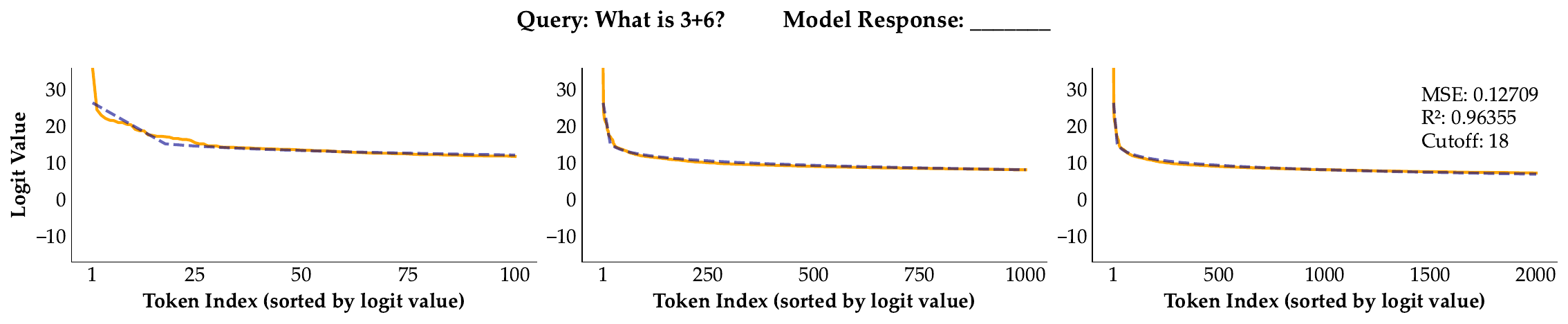}
      \end{subfigure}
       \hfill
       \vspace{5mm}
      \begin{subfigure}[!t]{0.99\textwidth}
        \includegraphics[width=1.0\linewidth]{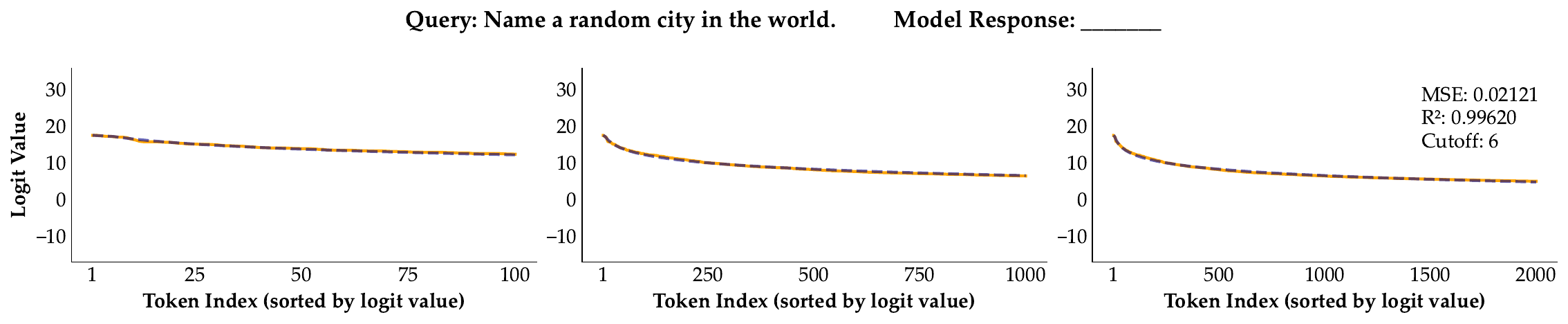}
      \end{subfigure}
       \hfill
       \vspace{5mm}
      \begin{subfigure}[!t]{0.99\textwidth}
        \includegraphics[width=1.0\linewidth]{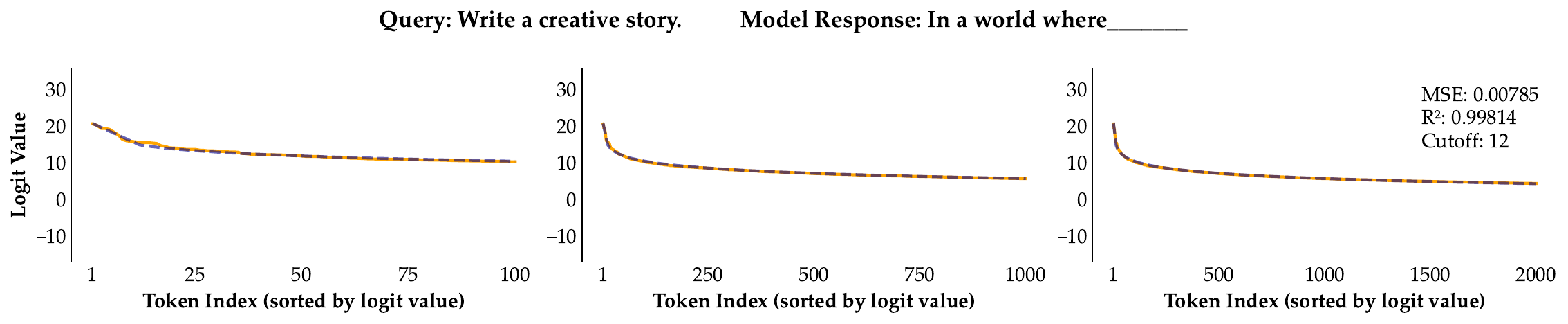}
      \end{subfigure}
       \hfill
       \vspace{5mm}
      \begin{subfigure}[!t]{0.99\textwidth}
        \includegraphics[width=1.0\linewidth]{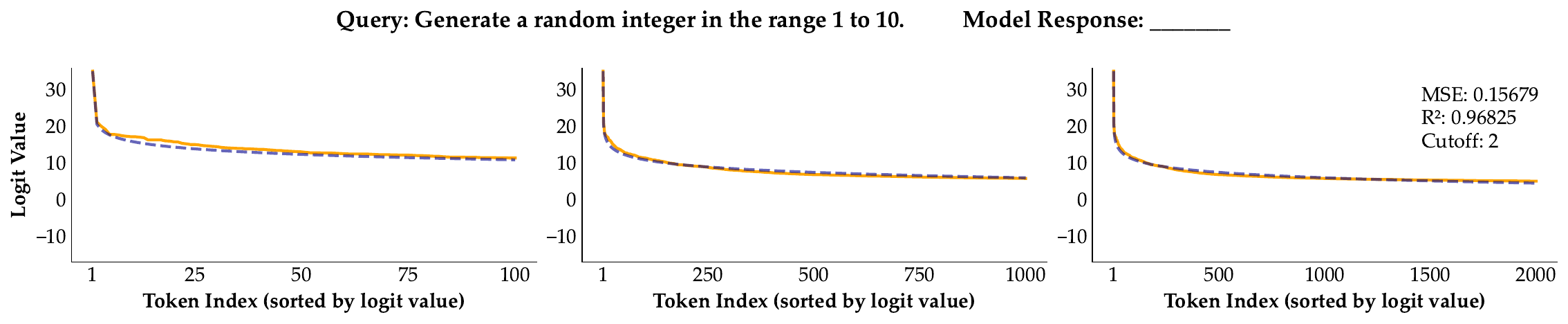}
      \end{subfigure}
       \hfill
       \vspace{5mm}
      \begin{subfigure}[!t]{0.99\textwidth}
        \includegraphics[width=1.0\linewidth]{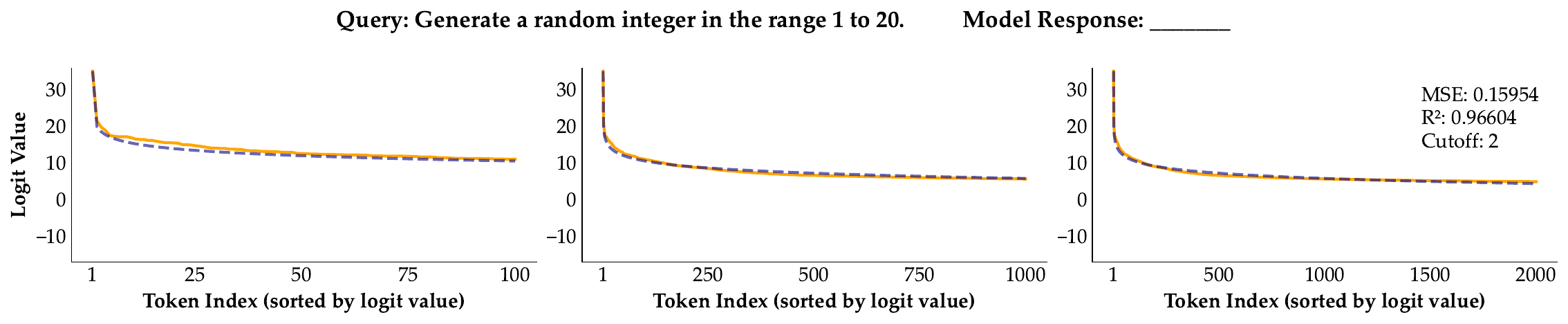}
      \end{subfigure}
       \hfill
       \vspace{5mm}
      \begin{subfigure}[!t]{0.99\textwidth}
        \includegraphics[width=1.0\linewidth]{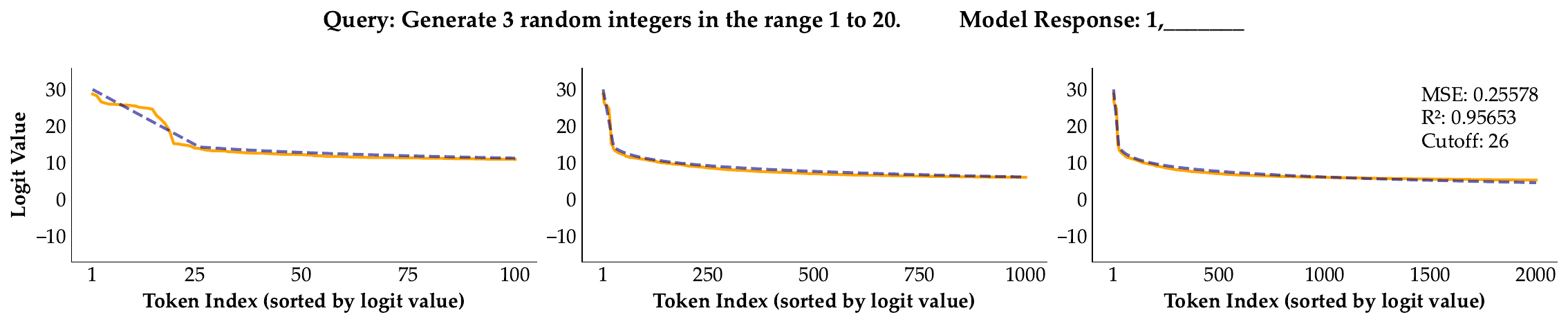}
      \end{subfigure}
       \hfill       
    \caption{Logit fitting on \texttt{Llama-3.1-8B-Instruct}}
    \label{fig:logits_llama}
\end{figure}

\begin{figure}[!h]
    \centering
      \begin{subfigure}[!t]{0.99\textwidth}
        \includegraphics[width=1.0\linewidth]{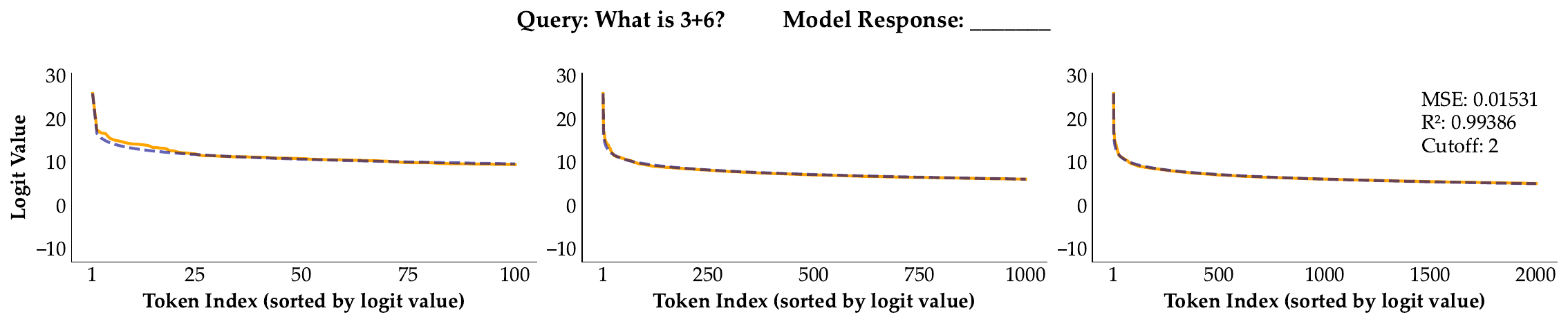}
      \end{subfigure}
       \hfill
       \vspace{5mm}
      \begin{subfigure}[!t]{0.99\textwidth}
        \includegraphics[width=1.0\linewidth]{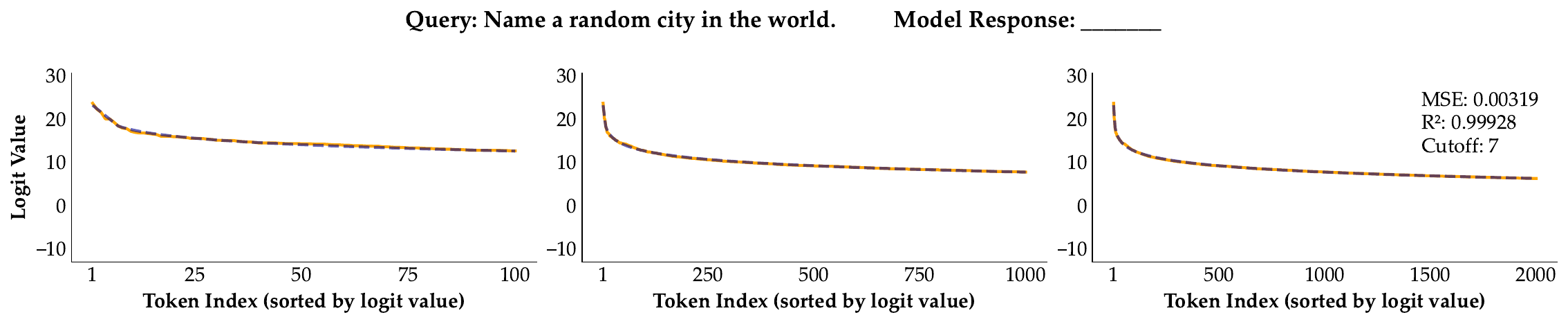}
      \end{subfigure}
       \hfill
       \vspace{5mm}
      \begin{subfigure}[!t]{0.99\textwidth}
        \includegraphics[width=1.0\linewidth]{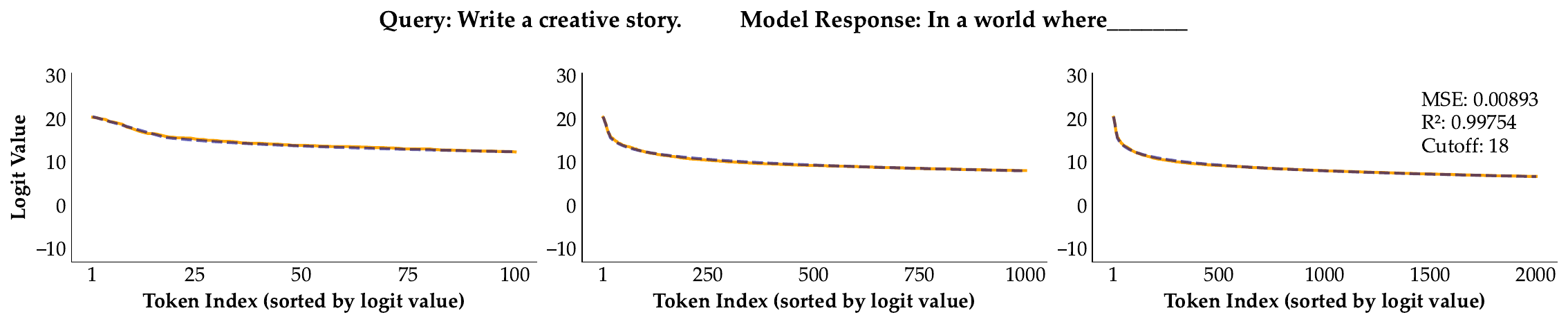}
      \end{subfigure}
       \hfill
       \vspace{5mm}
      \begin{subfigure}[!t]{0.99\textwidth}
        \includegraphics[width=1.0\linewidth]{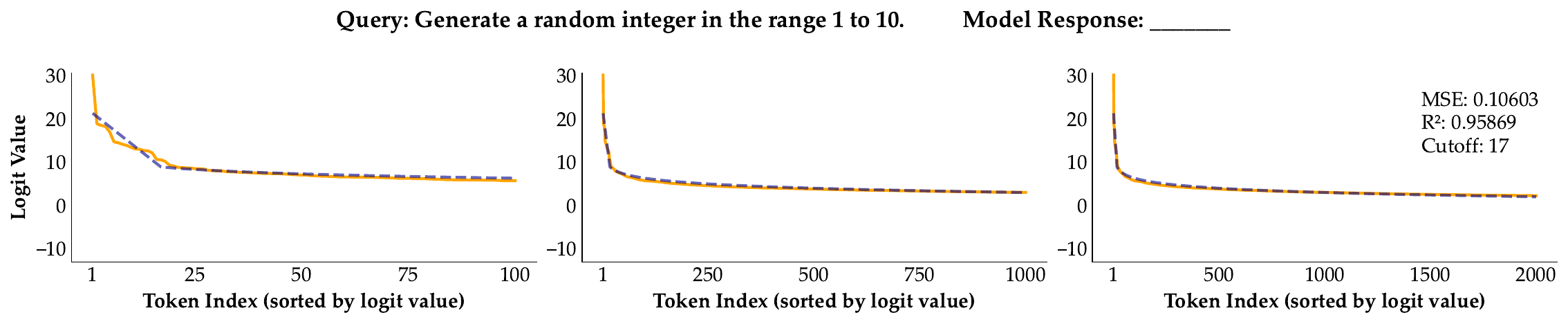}
      \end{subfigure}
       \hfill
       \vspace{5mm}
      \begin{subfigure}[!t]{0.99\textwidth}
        \includegraphics[width=1.0\linewidth]{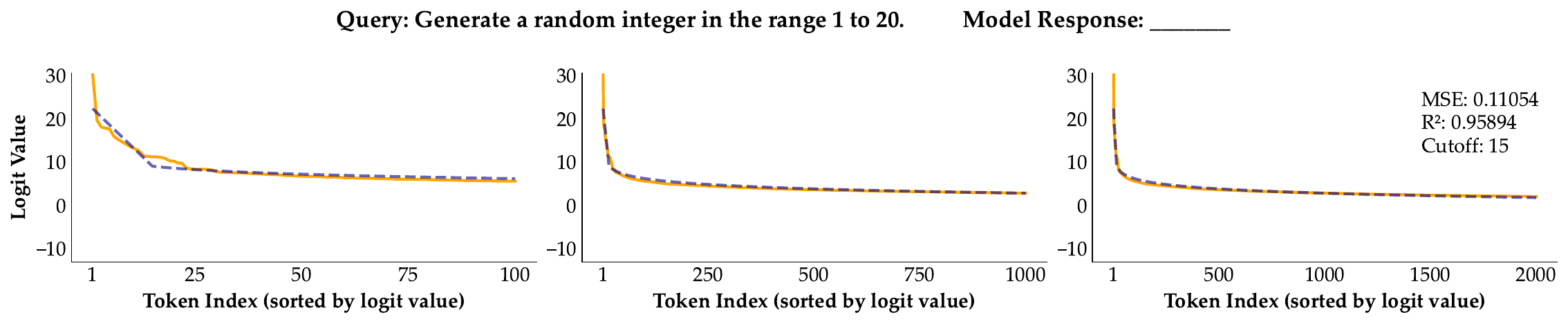}
      \end{subfigure}
       \hfill
       \vspace{5mm}
      \begin{subfigure}[!t]{0.99\textwidth}
        \includegraphics[width=1.0\linewidth]{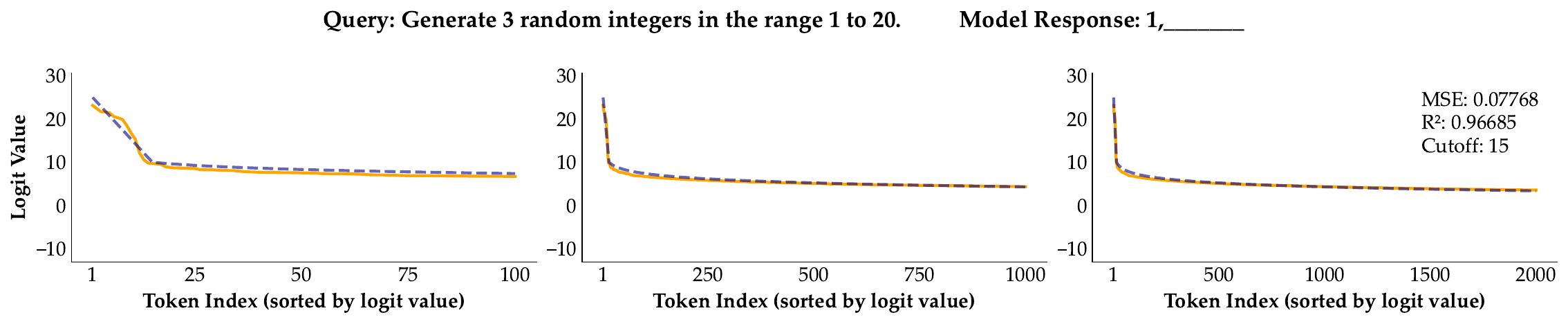}
      \end{subfigure}
       \hfill       
    \caption{Logit fitting on \texttt{Qwen3.5-35B-A3B}}
    \label{fig:logits_qwen}
\end{figure}

\begin{figure}[!h]
    \centering
      \begin{subfigure}[!t]{0.99\textwidth}
        \includegraphics[width=1.0\linewidth]{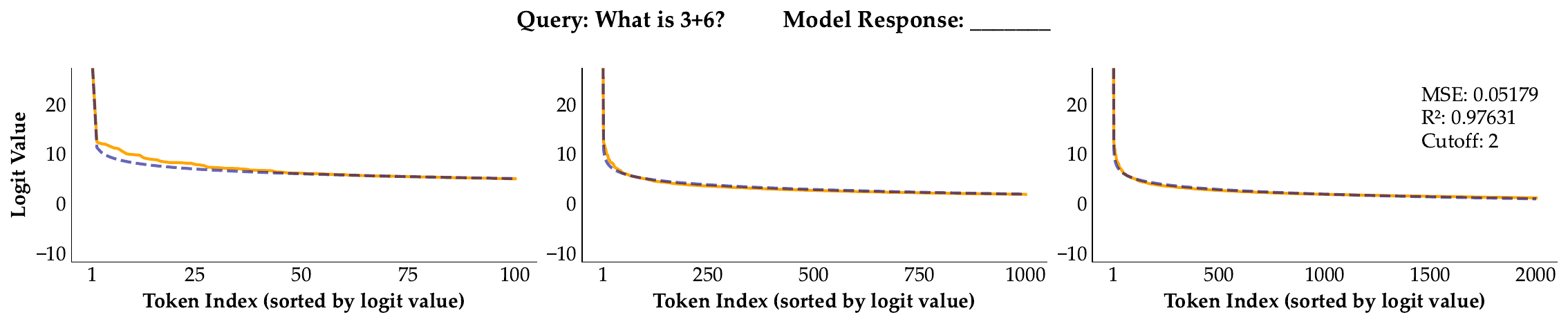}
      \end{subfigure}
       \hfill
       \vspace{5mm}
      \begin{subfigure}[!t]{0.99\textwidth}
        \includegraphics[width=1.0\linewidth]{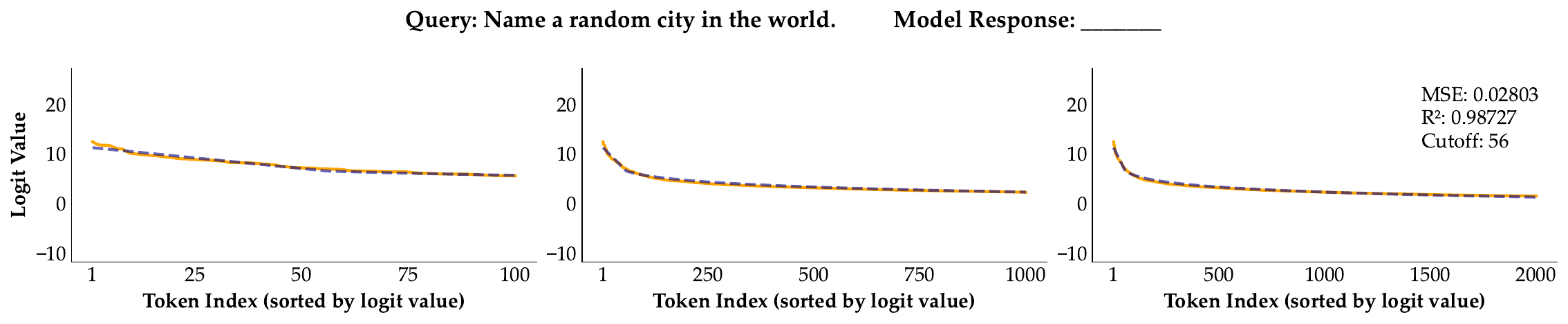}
      \end{subfigure}
       \hfill
       \vspace{5mm}
      \begin{subfigure}[!t]{0.99\textwidth}
        \includegraphics[width=1.0\linewidth]{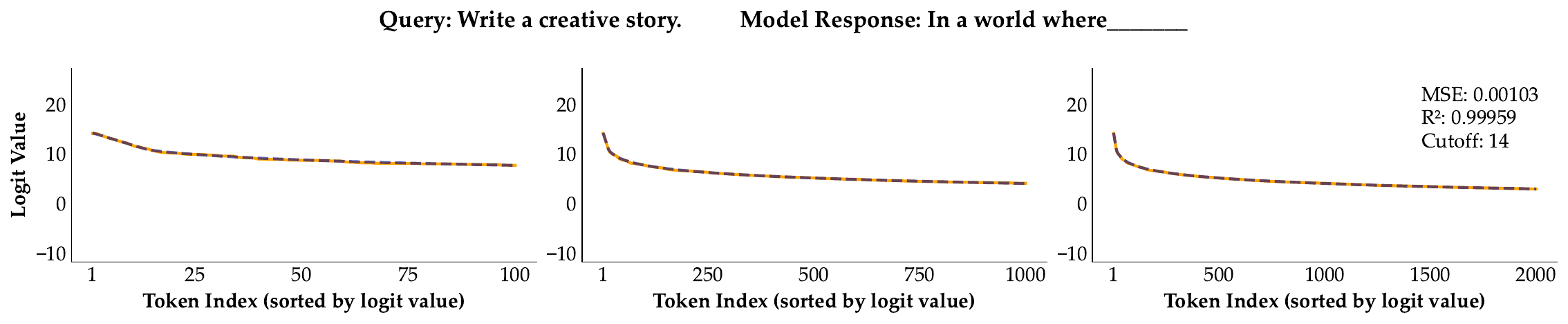}
      \end{subfigure}
       \hfill
       \vspace{5mm}
      \begin{subfigure}[!t]{0.99\textwidth}
        \includegraphics[width=1.0\linewidth]{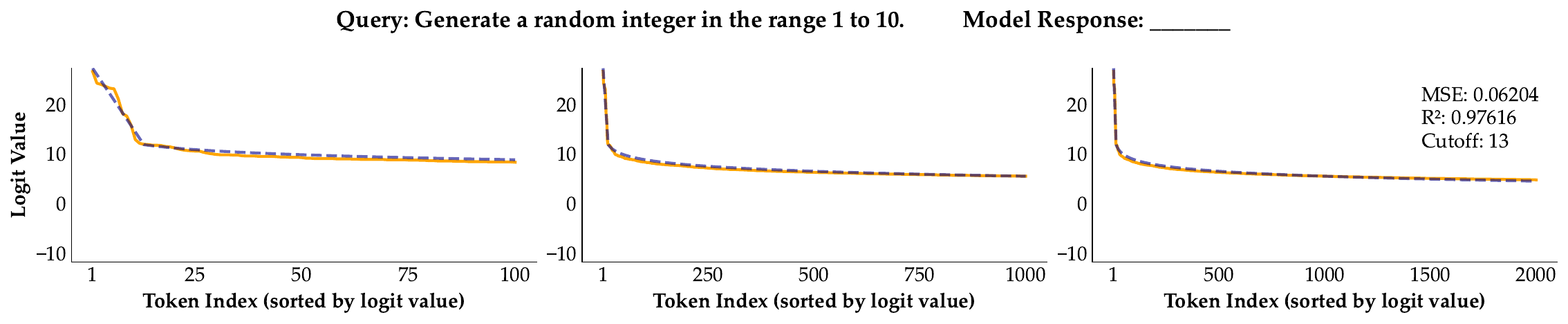}
      \end{subfigure}
       \hfill
       \vspace{5mm}
      \begin{subfigure}[!t]{0.99\textwidth}
        \includegraphics[width=1.0\linewidth]{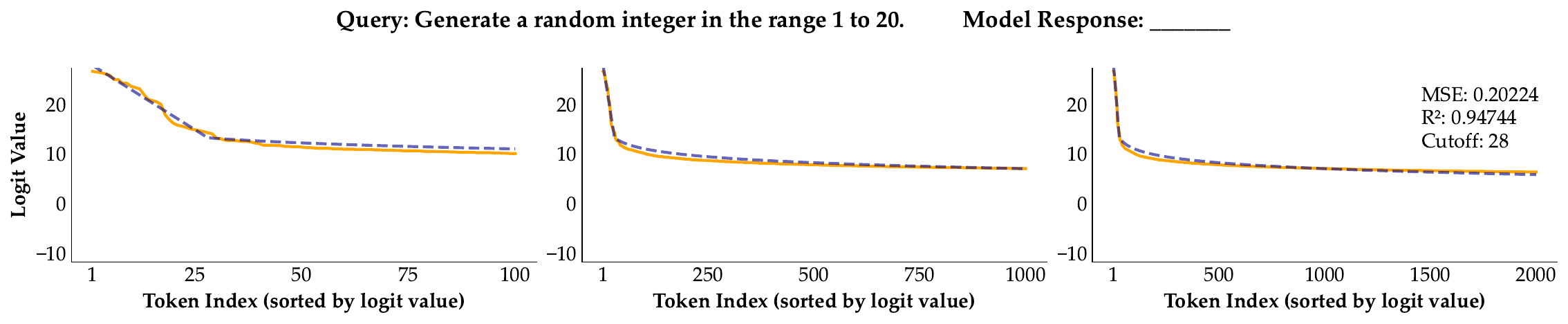}
      \end{subfigure}
       \hfill
       \vspace{5mm}
      \begin{subfigure}[!t]{0.99\textwidth}
        \includegraphics[width=1.0\linewidth]{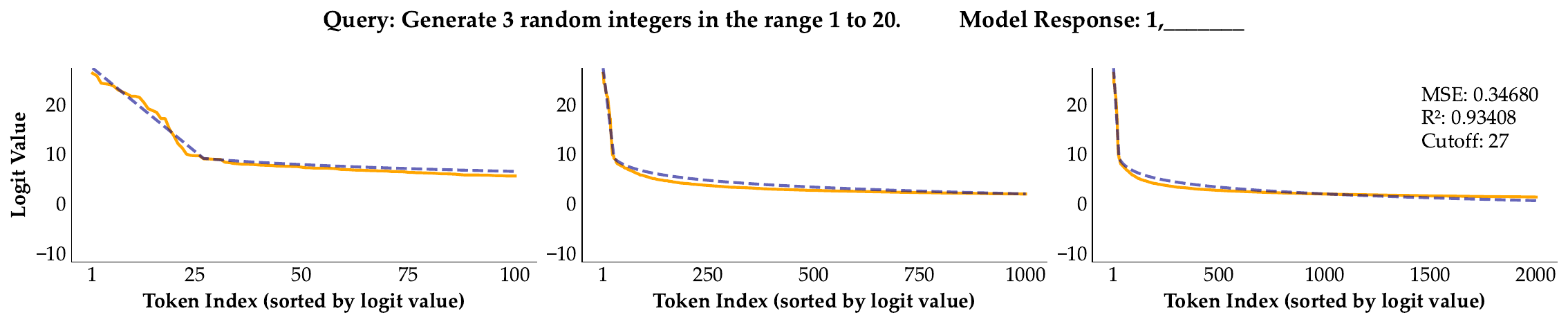}
      \end{subfigure}
       \hfill       
    \caption{Logit fitting on \texttt{Olmo-3-7B-Instruct}}
    \label{fig:logits_olmo}
\end{figure}

\section{Limitations}
\label{app:limitations}

Our work studies diversity collapse through the lens of validity--diversity
calibration. This perspective is useful because it connects output-level diversity failures to local properties of the model's conditional distributions. However, several limitations should be noted.

\paragraph{Controlled tasks are diagnostic rather than exhaustive.}
A substantial part of our empirical analysis uses controlled random-number generation tasks, where the valid set can be characterized exactly. This allows us to compute validity and diversity without relying on noisy semantic judgments, and makes it possible to isolate the effects of shape and order calibration. However, these tasks are not intended to capture the full complexity of open-ended generation. In domains such as creative writing, scientific ideation, dialogue, or planning, validity is semantic, context-dependent, and often graded rather than binary. Therefore, our controlled experiments should be interpreted as diagnostics for specific distributional mechanisms, not as a complete account of all diversity failures in realistic generation settings.

\paragraph{Token-level validity labels are approximate in open-ended settings.}
For open-ended tasks, the valid-token set is not directly observable. Our empirical procedure approximates token validity by extending a prefix with a candidate token, greedily decoding a completion, and then evaluating the final sequence with an LLM judge. This provides a practical estimate of whether a token preserves access to a valid continuation, but it is not an exact characterization of the true valid-token set: a token that leads to an invalid greedy continuation may still admit valid continuations under another decoding path.

To assess the reliability of this approximation, we conduct a human validation study comparing judge-based labels against human annotations. This helps quantify the extent to which the LLM judge agrees with human judgments and reduces concern that our precision--recall estimates are artifacts of a particular judge model. Nevertheless, the labels remain approximate, since both human and model judgments depend on the task rubric, validity threshold, and the particular continuation used for evaluation. Our controlled random-number experiments avoid this source of noise by using algorithmically known validity.

\paragraph{The theoretical models isolate mechanisms under stylized assumptions.}
Our theoretical results are designed to formalize clean mechanisms rather than to fully model all LLM distributions. In particular, the shape calibration analysis uses a ranked geometric distribution and, in its cleanest form, assumes invariant valid branching across prefixes. These assumptions make it possible to show how local sharpness and entropy losses compound across sequence positions. Real LLM conditionals may have heterogeneous branching factors, non-geometric tails, prefix-dependent valid sets, and interactions between syntax, semantics, and instruction-following constraints. The theory should therefore be read as a mechanistic explanation of why validity--diversity trade-offs can arise, rather than as a literal generative model of all LLM behavior.

\paragraph{Oracle baselines are diagnostic, not deployable.}
Several experiments use oracle information, such as the ground-truth valid-token set size or exact validity constraints in controlled tasks. These oracle baselines are not meant to be practical decoding methods. Their purpose is to separate failure modes. For example, an oracle-size cutoff tests whether a rank-based method would improve if it knew the correct local support size, while still failing when valid and invalid tokens are interleaved in rank. Thus, oracle performance should be interpreted as evidence about the source of the bottleneck, not as a directly available inference-time algorithm.

\paragraph{Sequence-level experiments are limited by sequence depth.}
Exact validity--diversity computation becomes expensive as sequence length grows, because the number of possible continuations increases rapidly. This limits the lengths and branching structures that can be exhaustively evaluated in controlled settings. Our experiments therefore emphasize short-to-moderate horizons where exact computation is feasible. The theory predicts that the relevant losses compound with the number of branching positions, but larger-scale empirical validation over longer sequences remains an important direction for future work.

\paragraph{This work diagnoses rather than solves the bottleneck.}
Our goal is to identify distributional mechanisms that constrain validity and diversity during decoding. We do not propose a new decoding algorithm or training objective that fully resolves these issues. The results suggest that effective solutions may need to improve the model's conditional calibration directly, or use auxiliary validity signals that go beyond probability rank. Designing such calibration-aware training or decoding methods is left for future work.

\section{Compute Resources}
\label{app:resources}
We conduct our experiments on 8 NVIDIA A6000 GPUs. For experiments on commercial models, we use the corresponding official API endpoints. Open-source LLM inference is conducted through the vLLM \citep{kwon2023efficient} package. The total computing time required to reproduce all our generations and evaluate the results is around 1 week. 

\section{Societal Impacts}
\label{app:societal_impacts}
Our work directly addresses the frontiers of LLM inference. In particular, our work aims to understand the validity and diversity of LLM-generated outputs through fine-grained token control. Although we observe no evidence for harmful content in our experiments, the token-level control could potentially lead to instances where sensitive/harmful content is produced. However, our method does not attempt to jailbreak or induce unsafe behavior in any model. 

On a high level, work incentivizing LLM generation diversity can produce novel ideas and solutions to societal problems, but also increase risk of biased content. Advancing model diversity and capability while guaranteeing trustworthiness and safety remains a high priority in this and all future works. 

\label{app:impacts}
%%%%%%%%%%%%%%%%%%%%%%%%%%%%%%%%%%%%%%%%%%%%%%%%%%%%%%%%%%%%

% \newpage
% \input{checklist.tex}

%%%%%%%%%%%%%%%%%%%%%%%%%%%%%%%%%%%%%%%%%%%%%%%%%%%%%%%%%%%%

\end{document}